\def\eqref#1{equation~\ref{#1}}
\def\1{\bm{1}}
\DeclareMathAlphabet{\mathsfit}{\encodingdefault}{\sfdefault}{m}{sl}
\SetMathAlphabet{\mathsfit}{bold}{\encodingdefault}{\sfdefault}{bx}{n}
\newtheorem{proposition}{Proposition}
\newtheorem{assumption}{Assumption}
\title{Language-guided Open-world Video Anomaly Detection under Weak Supervision}
\author{Zihao Liu, Xiaoyu Wu\thanks{Corresponding author} , Jianqin Wu, Xuxu Wang \& Linlin Yang \\
State Key Laboratory of Media Convergence and Communication\\
Communication University of China\\
\texttt{\{liuzihao, wuxiaoyu\}@cuc.edu.cn}\\
\texttt{\{wujianqin, wangxuxu\}@mails.cuc.edu.cn mu4yang@gmail.com}
}
\newcommand{\ie}{i.e.\@}
\newcommand{\eg}{e.g.\@}
\newcommand{\etc}{etc.\@}
\begin{document}

\maketitle

\begin{abstract}
Video anomaly detection (VAD) aims to detect anomalies that deviate from what is expected.
In open-world scenarios, the expected events may change as requirements change. 
For example, not wearing a mask may be considered abnormal during a flu outbreak but normal otherwise.
However, existing methods assume that the definition of anomalies is invariable, and thus are not applicable to the open world.
To address this, we propose a novel open-world VAD paradigm with variable definitions, allowing guided detection through user-provided natural language at inference time. 
This paradigm necessitates establishing a robust mapping from video and textual definition to anomaly scores.
Therefore, we propose LaGoVAD (\textbf{La}nguage-\textbf{g}uided \textbf{O}pen-world \textbf{V}ideo \textbf{A}nomaly \textbf{D}etector), a model that dynamically adapts anomaly definitions under weak supervision with two regularization strategies: diversifying the relative durations of anomalies via dynamic video synthesis, and enhancing feature robustness through contrastive learning with negative mining.
Training such adaptable models requires diverse anomaly definitions, but existing datasets typically provide labels without semantic descriptions.
To bridge this gap, we collect PreVAD (\textbf{Pre}-training \textbf{V}ideo \textbf{A}nomaly \textbf{D}ataset), the largest and most diverse video anomaly dataset to date, featuring 35,279 annotated videos with multi-level category labels and descriptions that explicitly define anomalies.
Zero-shot experiments on seven datasets demonstrate LaGoVAD's SOTA performance. 
Our dataset and code are released at \url{https://github.com/Kamino666/LaGoVAD-PreVAD}.
\end{abstract}
    
\section{Introduction}
\label{sec:intro}

Video Anomaly Detection (VAD) aims to identify frames in videos that deviate from expected patterns \citep{VAD-review-2023,VAD-review-wupeng}, which is applicable in fields such as intelligent surveillance \citep{AD-review-2020}. In recent years, many VAD methods have achieved commendable performance employing weak supervision \citep{PEL,CLIP-TSA,NormalityGuidance,TEVAD,ucf-crime,vadclip} in the closed-set setting. 
However, there is a consensus \citep{ovvad,AD-review-2020,Openset,msad} that the field is moving towards enabling models to detect anomalies beyond the training data in open-world scenarios.

As shown in Fig.~\ref{fig:motivation}a, the training data of VAD models encompass patterns labeled as \textit{normal} or \textit{abnormal}, where normal patterns include activities such as running or driving and abnormal patterns include events like explosions.
Conventional closed-set methods (Fig.~\ref{fig:motivation}b) \citep{PEL,vadclip} aim to detect patterns identical to those encountered during training when applied to test sets, thereby restricting their application in open-world scenarios.
In contrast, open-set approaches (Fig.~\ref{fig:motivation}c) \citep{Openset} (including open-vocabulary \citep{ovvad} and domain generalization \citep{WACV_CrossDomain_ZS, ECCV_CrossDomain_LimitedSup, DomainGeneralizationVAD} methods) are able to detect novel patterns absent from the training data without tuning.
However, these methods neglect the critical issue of potential label change during testing (Fig.\ref{fig:motivation}d), \ie,  patterns originally labeled as normal may be redefined as abnormal (and vice versa).
A representative example from Fig.~\ref{fig:motivation}e demonstrates this phenomenon: while \textit{pedestrian on road} is regarded as a normal behavior in conventional crime anomaly datasets, this same pattern would typically be classified as abnormal in freeway surveillance scenarios. The cause of such label change lies in the user's different definition of what constitutes anomalies, driven by environments or temporal policies.
Formally, this is a concept drift issue, as defined in \citep{Drift}, which refers to the divergence between the conditional probability distributions of training and testing phases, \ie, $P_{\text{train}}(Y|V) \neq P_{\text{test}}(Y|V)$, where $V$ are videos and $Y$ are anomaly labels.
\begin{figure}[tbp]
    \centering
    \includegraphics[width=\linewidth]{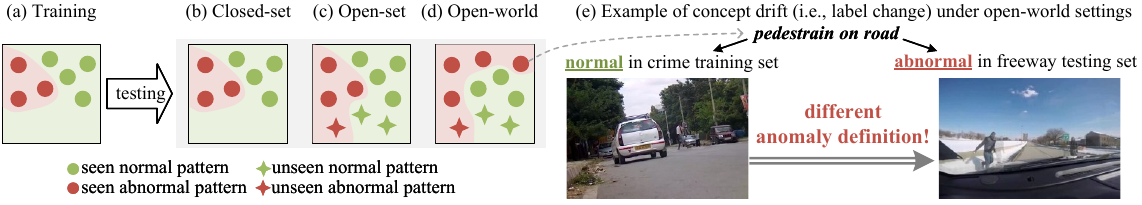}
    \caption{Comparison of different VAD paradigms. Closed-set methods (b) can only detect anomalies in the training scope, while open-set methods (c) can detect novel anomalies. Our open-world approach (d) can deal with label change in open-world scenarios, with an example in (e).}
    \vspace{-1em}
    \label{fig:motivation}
\end{figure}
While some attempts have begun to address this, critical limitations remain.
Scene-dependent methods \citep{NWPU,Look-Around,WACV_CrossDomain_ZS} associate the anomaly definition with scenes, neglecting user-specific requirements (\eg, hospital administrators may require detecting the anomaly of not wearing masks during influenza outbreaks but not at other times). 
Meanwhile, a dataset-dependent method \citep{MultiDomain} explores anomaly conflicts across datasets, but remains constrained by predefined categories of training datasets.
Besides the limitation of task settings, existing methods are evaluated on limited scenes with small-scale data, lacking extensive zero-shot cross-domain comparisons to verify the open-world capability.

To address the concept drift challenge, we propose a novel open-world paradigm. First, we explicitly model the anomaly definition as a stochastic variable instead of fixing it as one or a few realizations. Then, we condition predictions $Y$ on both the video $v$ and the anomaly definition $Z$, \ie, learning a mapping $\Phi: (V,Z) \rightarrow Y$. Since we take the changing definition into account, we effectively avoid concept drift (as detailed in Sec.~\ref{sec:paradigm}). Finally, to enable natural interaction, we employ textual anomaly definition, allowing users to dynamically define anomalies via language.

However, learning $\Phi$ requires modeling a more complex multimodal space, resulting in decaying sample density that leads to overfitting. To address this, we mitigate it from both model and dataset perspectives.
\textbf{As for the model}, we propose a \textbf{La}nguage-\textbf{G}uided \textbf{O}pen-world \textbf{V}ideo \textbf{A}nomaly \textbf{D}etector (LaGoVAD), which employs two regularization strategies to reduce overfitting:
1) Aligning vision and language through contrastive learning with negative sample mining, which increases the sample's quality and learns more robust features.
2) Incorporating a dynamic video synthesis module that generates long videos and pseudo-labels on the fly, which diversifys the relative duration of abnormal events.
\textbf{As for the dataset}, we construct a large-scale diversified \textbf{Pre}-training \textbf{V}ideo \textbf{A}nomaly \textbf{D}ataset (PreVAD), collected through a scalable data curation pipeline utilizing foundation models to automate data cleaning and annotation, significantly reducing manual labeling costs while ensuring high quality. 
PreVAD comprises 35,279 videos annotated with multi-level categories and anomaly descriptions, supporting weakly-supervised training. To our knowledge, PreVAD surpasses existing datasets in diversity and scale. 

The model is evaluated under two zero-shot evaluation protocols: one comprehensively assesses open-world capability by evaluating cross-domain performance across seven diverse datasets, addressing key open-world challenges such as detecting unseen categories and handling concept drift, while the other specifically measures concept drift by averaging performance on a dataset under different anomaly definitions.
Our contributions are summarized as follows:

\begin{enumerate}
    \item We reformulate open-world VAD that pioneers the formulation of the concept drift in VAD and proposes a joint modeling paradigm to avoid it.
    \item We propose a novel language-guided video anomaly detection model, LaGoVAD, which implements the proposed paradigm and incorporates two regularization strategies to mitigate overfitting.
    \item We build a large-scale and diverse dataset, PreVAD, annotated with multi-level taxonomy and anomaly descriptions to enhance generalization under the new paradigm.
    \item We conduct zero-shot cross-dataset evaluation and concept drift evaluation to validate the generalization, where LaGoVAD achieves state-of-the-art performance.
\end{enumerate}

\section{Related Work}
\label{sec:related}

\subsection{Video Anomaly Datasets}
We summarize the characteristics of existing video anomaly datasets in Tab.~\ref{tab:dataset}.
\textbf{Scale}: The largest standalone dataset \citep{xdviolence} contains only 5K videos, with ensemble datasets reaching 7.8K \citep{HAKW}. The data scarcity limits the performance of VAD.
\textbf{Domain \& Category}: Many datasets focus only on a single scene, such as traffic or campus. The few datasets that cover multiple scenes overlook domains like mishaps, animal-related violence, and production accidents.
\textbf{Text Annotation}: Existing VAD datasets are labeled with anomaly categories, which introduces semantic ambiguity. Although some datasets provide different types of text annotation, they focus on understanding or captioning tasks and cannot provide a fine-grained overall description of the anomaly in a video. 
\textbf{Source}: Current datasets are mainly from public web videos, while others rely on synthetic generation \citep{ubnormal,SyntheticICPR24} or movie clips \citep{xdviolence}. However, synthetic datasets suffer from misalignment with the real world, and movie data raises concerns about potential copyright infringement.
In this paper, we propose a scalable data curation pipeline to collect a novel dataset, which has large-scale diversified videos with multi-level taxonomy and anomaly descriptions.

\begin{table}[tbp]
\centering
\caption{Comparisons between PreVAD and existing datasets. Our dataset 1) has the largest scale and broadest domain coverage, 2) is annotated with abnormal video descriptions, 3) enables zero-shot evaluation without relying on existing VAD datasets.}
\label{tab:dataset}
\resizebox{\linewidth}{!}{
\begin{tabular}{@{}lllcll@{}}
\toprule
Dataset          & \begin{tabular}[c]{@{}l@{}}\# videos\\ (\# abnormal videos)\end{tabular} & Domain & \# categories  & Text Anno.  & Source     \\ \midrule
ShanghaiTech \citep{ShanghaiTech}  & 437       (107)                                                      & campus                                     & 13            & -                 & recording       \\
UCF-Crime \citep{ucf-crime}        & 1900      (950)                                                      & crime                                      & 14            & -                 & web             \\
XD-Violence \citep{xdviolence}     & 4754      (2405)                                                     & crime                                      & 7             & -                 & web,movie       \\
LAD \citep{LAD}                    & 2000      (762)                                                      & crime, traffic, animal, mishap             & 14            & -                 & web             \\
TAD \citep{TAD}                    & 500       (250)                                                      & crime                                      & 8             & -                 & web             \\
UBNormal \citep{ubnormal}          & 543       (278)                                                      & pedestrian                                 & 28            & -                 & synthesis       \\
DoTA \citep{DoTA}                  & 5677      (5677)                                                     & traffic                                    & 9             & -                 & web             \\
MSAD \citep{msad}                  & 720       (240)                                                      & crime, traffic, mishap                     & 55            & -                 & web             \\
UCCD \citep{UCCD}                  & 1012      (382)                                                      & crime                                      & -             & dense             & UCF             \\
UCA \citep{UCA}                    & 1854      (944)                                                      & crime                                      & -             & dense             & UCF             \\
VAD-Instruct50k \citep{Holmes-VAD} & 5547      (2715)                                                     & crime                                      & -             & instruction       & UCF+XD          \\
HAWK \citep{HAKW}                  & 7852      (6677)                                                     & crime, traffic                             & -             & instruction       & 7 VAD datasets  \\
CUVA \citep{CUVA}                  & 1000      (1000)                                                     & crime, traffic, pedestrian, animal         & 42            & instruction       & web             \\
\textbf{PreVAD}                   & \textbf{35279 (11979)}                                               & \textbf{\begin{tabular}[c]{@{}l@{}}crime, traffic, animal, \\ mishap, production\end{tabular}} & 35            & \begin{tabular}[c]{@{}l@{}}anomaly \\ description\end{tabular} & web             \\
\bottomrule
\end{tabular}
}
\end{table}

\subsection{Open-world Video Anomaly Detection Methods}
Intuitively, open-world VAD models should detect novel anomalies beyond the training set \citep{Openset,ovvad,HAKW,ECCV_CrossDomain_LimitedSup}.
From a task paradigm perspective, early attempts adopt open-set and domain generalization strategies \citep{ubnormal,Openset,ECCV_CrossDomain_LimitedSup}.
Then, \citet{ovvad} extends this paradigm with open-vocabulary VAD, enabling both detection and classification of unseen anomalies. However, these approaches implicitly assume a fixed anomaly definition and restrict model exposure to partial categories during training, unable to deal with the concept drift issue.
Recent studies explore the dynamic anomaly definition: \citet{NWPU,Look-Around,WACV_CrossDomain_ZS} posit that anomaly is scene-dependent (\eg, identical behaviors classified differently across scenes), training models to infer scene-anomaly correlations from data, and \citet{MultiDomain} trains dataset-specific classifiers. Despite these efforts, they lack the ability of user-customizable anomaly definition, limiting their applicability in open-world scenarios.
Additionally, some recently developed MLLM-based methods \citep{HAKW,followtherules,LAVAD,Holmes-VAU,vera} have the potential to address open-world problems through prompt engineering, but they do not systematically study the issue of concept drift and generally require great computational costs. And we find that prompt engineering alone is unable to achieve satisfactory performance.

From a model design perspective, current advancements primarily adopt two ways: 1) data-driven approaches \citep{ubnormal,Openset,ECCV_CrossDomain_LimitedSup,ovvad} enhance generalization by utilizing more data, while 2) cross-modal alignment approaches \citep{TEVAD,vadclip,NormalityGuidance} aim to construct more robust feature spaces by aligning vision and language. 
However, they neglect the problem of duration distribution shifts when leveraging more data and only align videos to class-level text embeddings without further fine-grained aligning.

Our work introduces a novel open-world VAD paradigm that allows users to flexibly define anomalies to guide detection, thereby avoiding concept drift. We implement this paradigm via a model featuring dynamic video synthesis and contrastive learning with hard negative mining.
Inspired by data augmentation methods \citep{cutout,ren2025vista} and contrastive learning methods \citep{CLIP2021RN153}, the dynamic video synthesis module synthesizes videos of variable durations to increase the diversity and coverage of temporal patterns, and the hard negative mining module increases the sample's quality to achieve fine-grained modal alignment.

\section{Paradigm: Language-guided Open-world Video Anomaly Detection}
\label{sec:paradigm}

We define open-world video anomaly detection as the task of identifying video frames containing abnormal patterns, where the definition of abnormality may change during testing.
Abnormal patterns manifest as events, behaviors, or actions (\eg, running).
In practice, the definition of anomalies may change as requirements change, influenced by cultural differences, policy updates, and specific environments.
The user may expand the definition to detect new anomalies or narrow the definition to remove those of no interest, which causes the label of a particular pattern to change. For instance, while running is generally normal behavior, it becomes abnormal in libraries or offices.
Based on these observations, we propose the definition-determined abnormality assumption:
\begin{assumption}[Definition-determined abnormality]\label{assume:1}
    Let $V, Z, Y$ be random variables denoting the video, the definition, 
    and the anomaly label, respectively. We assume that $Y$ is solely 
    determined by $V$ and $Z$. That is, there exists a deterministic function
    $\mathcal{F}$ such that for all $v, z, y$,
    \[
        P(V=v,Z=z,Y=y)>0 
        \quad \Longrightarrow \quad
        y = \mathcal{F}(v, z).
    \]
\end{assumption}

\begin{proposition}
    \label{prop:1}
    Let $D$ be a random variable denoting the domain, where each value $d$ induces a joint distribution $P_d(V,Z,Y)$ (e.g., training domain or testing domain).
    For each domain $d$, let $P_d(\cdot)$ denote probabilities under the conditional distribution $P(\cdot \mid D=d)$.
    Under Assumption~1, for any two domains $d_1$ and $d_2$,
    \[
        P_{d_1}(Y \mid V,Z) = P_{d_2}(Y \mid V,Z).
    \]
\end{proposition}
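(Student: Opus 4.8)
The plan is to show that Assumption~\ref{assume:1} forces the conditional law of $Y$ given $(V,Z)$ to be a degenerate point mass whose location is fixed by the single, domain-agnostic function $\mathcal{F}$; since the same function governs every domain, the two conditionals must coincide. First I would connect the global distribution $P$ to the per-domain distributions by viewing $P$ as the mixture $P(V,Z,Y)=\sum_{d} P(D=d)\,P_d(V,Z,Y)$ over domains with $P(D=d)>0$. Then positivity of $P_d(V=v,Z=z,Y=y)$ implies positivity of the global $P(V=v,Z=z,Y=y)$, so the hypothesis of Assumption~\ref{assume:1} transfers to each domain: whenever $P_d(V=v,Z=z,Y=y)>0$ we still have $y=\mathcal{F}(v,z)$.

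Next, I would fix a domain $d$ and a pair $(v,z)$ with $P_d(V=v,Z=z)>0$, so that the conditional $P_d(Y\mid V=v,Z=z)$ is well defined. Any label $y$ in the support of this conditional satisfies $P_d(V=v,Z=z,Y=y)>0$, which by the previous step forces $y=\mathcal{F}(v,z)$. Hence the conditional places all its mass on the single value $\mathcal{F}(v,z)$, i.e.
\[
P_d(Y=y\mid V=v,Z=z)=\mathbf{1}\{\,y=\mathcal{F}(v,z)\,\}.
\]
Because the right-hand side does not depend on $d$, applying this identity to both $d_1$ and $d_2$ and comparing immediately yields $P_{d_1}(Y\mid V,Z)=P_{d_2}(Y\mid V,Z)$.

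The main subtlety, and the point I would be most careful about, is the domain of validity of the claimed equality: a conditional $P_d(Y\mid V=v,Z=z)$ is only defined where the marginal $P_d(V=v,Z=z)$ is strictly positive, so the statement should be read on the common support $\{(v,z): P_{d_1}(V=v,Z=z)>0 \text{ and } P_{d_2}(V=v,Z=z)>0\}$. On that set both sides reduce to the same indicator $\mathbf{1}\{\,y=\mathcal{F}(v,z)\,\}$, while outside it the assertion is vacuous since one conditional is undefined. I would also state explicitly the mild regularity assumption $P(D=d)>0$ used in the mixture step; without it a domain could concentrate on a triple $(v,z,y)$ of zero global probability, and the conclusion of Assumption~\ref{assume:1} would no longer be available to pin down the label.
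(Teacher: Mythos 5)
Your proof is correct and takes essentially the same route as the paper's: Assumption~\ref{assume:1} forces $P_d(Y=y \mid V=v, Z=z)$ to collapse to the indicator $\mathbf{1}\{y = \mathcal{F}(v,z)\}$, which is domain-independent, so the two conditionals coincide. The only difference is that you make explicit two points the paper leaves implicit---the transfer of the globally stated assumption to each domain-conditional $P_d$ (via the mixture decomposition and the requirement $P(D=d)>0$) and the restriction to pairs $(v,z)$ in the common support where both conditionals are defined---which is a welcome bit of extra rigor rather than a different argument.
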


\begin{proposition}
    \label{prop:2}
    Consider two domains, $d_1$ and $d_2$, where the anomaly definition shifts between domains, i.e., $P_{d_1}(Z \mid V) \neq P_{d_2}(Z \mid V)$. 
    Suppose there exist at least one video $v^\star$ and one label $y^\star$ such that the conditional probability mass assigned to anomaly definitions that predict $v^\star$ as $y^\star$ differs across domains, i.e.,
    \[
    \sum_{z : \mathcal{F}(v^\star,z)=y^\star} P_{d_1}(Z=z \mid V=v^\star)
    \neq
    \sum_{z : \mathcal{F}(v^\star,z)=y^\star} P_{d_2}(Z=z \mid V=v^\star),
    \]
    then
    \[
        P_{d_1}(Y | V) \neq P_{d_2}(Y | V),
    \]
    which corresponds to the concept drift issue defined in \citet{Drift}
\end{proposition}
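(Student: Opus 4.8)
The plan is to compute $P_d(Y \mid V)$ explicitly for each domain by marginalizing out the definition variable $Z$, and then to observe that Assumption~\ref{assume:1} forces the only domain-dependent ingredient to be $P_d(Z \mid V)$. Concretely, for a fixed $d$ and any $v, y$ with $P_d(V=v)>0$, I would apply the law of total probability over $Z$ to write
\[
P_d(Y=y \mid V=v) = \sum_z P_d(Y=y \mid Z=z, V=v)\, P_d(Z=z \mid V=v).
\]
The first factor is the object I want to render domain-free, since the second factor is precisely the conditional whose shift is hypothesized.

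Next I would invoke Assumption~\ref{assume:1} (equivalently Proposition~\ref{prop:1}) to replace that first factor with a domain-independent indicator. For any $z$ with $P_d(Z=z, V=v)>0$, positivity of the joint forces $y=\mathcal{F}(v,z)$, so $P_d(Y=y \mid Z=z, V=v) = \1[\mathcal{F}(v,z)=y]$; for any $z$ with $P_d(Z=z,V=v)=0$ the corresponding term simply vanishes from the sum regardless of how the conditional is defined. Substituting this indicator collapses the marginal to a sum over the preimage set of $\mathcal{F}$,
\[
P_d(Y=y \mid V=v) = \sum_{z:\mathcal{F}(v,z)=y} P_d(Z=z \mid V=v),
\]
where, crucially, the index set $\{z : \mathcal{F}(v,z)=y\}$ does not depend on $d$ because $\mathcal{F}$ is a single fixed function shared across all domains.

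Finally I would specialize to $v=v^\star$ and $y=y^\star$ and read off the conclusion directly. The right-hand side above is exactly the aggregated preimage mass appearing in the proposition's hypothesis, so the assumed inequality of those two sums transfers verbatim to $P_{d_1}(Y=y^\star\mid V=v^\star)\neq P_{d_2}(Y=y^\star\mid V=v^\star)$. Since the two conditional distributions $P_{d_1}(Y\mid V)$ and $P_{d_2}(Y\mid V)$ already disagree at the single point $(v^\star,y^\star)$, they are unequal as functions, which is precisely the concept drift asserted.

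I do not anticipate a genuine obstacle: the argument is essentially a one-line marginalization once the indicator substitution is justified. The only care needed is the bookkeeping around conditioning events of probability zero (handled by noting such terms drop out of the sum) and making explicit that $Z$ is discrete so the sums are literal; were $Z$ continuous, the identical step reappears with an integral over $\{z:\mathcal{F}(v,z)=y\}$ and the same conclusion. I would also flag that the stated premise $P_{d_1}(Z\mid V)\neq P_{d_2}(Z\mid V)$ is strictly weaker than what is actually used: the effective driver is the finer condition on the aggregated preimage mass, which is why the proposition isolates it as a separate hypothesis rather than relying on the bare shift in $P_d(Z\mid V)$.
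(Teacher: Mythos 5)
Your proof is correct and follows essentially the same route as the paper's: the law of total probability over $Z$, then Assumption~\ref{assume:1} to collapse $P_d(Y=y\mid V=v)$ to the preimage sum $\sum_{z:\mathcal{F}(v,z)=y}P_d(Z=z\mid V=v)$, then reading off the inequality at $(v^\star,y^\star)$. Your added care about zero-probability conditioning events and your remark that the aggregated-preimage hypothesis is the real driver (the bare shift in $P_d(Z\mid V)$ being insufficient) are refinements of, not departures from, the paper's argument.
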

The proof of the above propositions is provided in Section~\ref{sec:proof}. We also discuss some special situation of the assumption in Section~\ref{sec:diss-assumption}.

Intuitively, we assume that the anomaly label of a video is determined solely by the video itself and the anomaly definition (Assumption~\ref{assume:1}). When the anomaly definition changes, the label of the same video is likely to change as well, which leads to concept drift (Proposition~\ref{prop:2}). In contrast, for any fixed video and anomaly definition, the corresponding label remains unchanged regardless of how the underlying data distribution shifts, and thus eliminates concept drift (Proposition~\ref{prop:1}).

Existing methods can be seen as modeling $\Phi: V \rightarrow Y$ and performing detection based on a fixed definition $z$ sampled from $Z$, which faces the concept drift since $P(Y \mid V)$ may change:
\begin{equation}
    \theta^\star = 
    \mathop{\arg\min}\limits_{\theta}
    \mathbb{E}_{(v,y)\sim P(V,Y)}
    [\mathcal{L}(\Phi(v;\theta,z), y)],
    \label{eq:traditional}
\end{equation}
where $\theta$ denotes the parameters of the model $\Phi$, and $\mathcal{L}$ denotes the loss function.
It is worth emphasizing that some methods that can detect unknown anomalies also belong to this paradigm, including open-set \citep{Openset,ubnormal}, domain generalization \citep{DomainGeneralizationVAD} and open-vocabulary \citep{ovvad} methods, because they assume a fixed category set under a specific definition and only a subset are available in training. Under their assumption, an abnormal pattern would never change to normal, and thus they are unable to deal with the concept drift.

In contrast, we propose a paradigm that directly models $\Phi: (V,Z) \rightarrow Y$, which could avoid the concept drift since $P(Y \mid V,Z)$ remains unchanged. It assumes a dynamic anomaly definition during training and conditions predictions on both the video and the definition. 
Formally, 
\begin{equation}
    \theta^\star = 
    \mathop{\arg\min}\limits_{\theta}
    \mathbb{E}_{(v,z,y)\sim P(V,Z,Y)}
    [\mathcal{L}(\Phi(v,z;\theta), y)].
    \label{eq:ours}
\end{equation}
During training, the model $\Phi$ learns an optimal set of parameters $\theta$ that detect anomalies in video $v$ under the guidance of definition $z$.
We later implement $z$ in the form of natural language, but theoretically, it could be images, videos, audio, or a learned embedding.
It should be emphasized that although the new paradigm theoretically avoids concept drift, its practical effectiveness still depends on the model $\Phi$ and its parameters $\theta$.

\begin{figure}[tbp]
    \centering
    \includegraphics[width=0.95\linewidth]{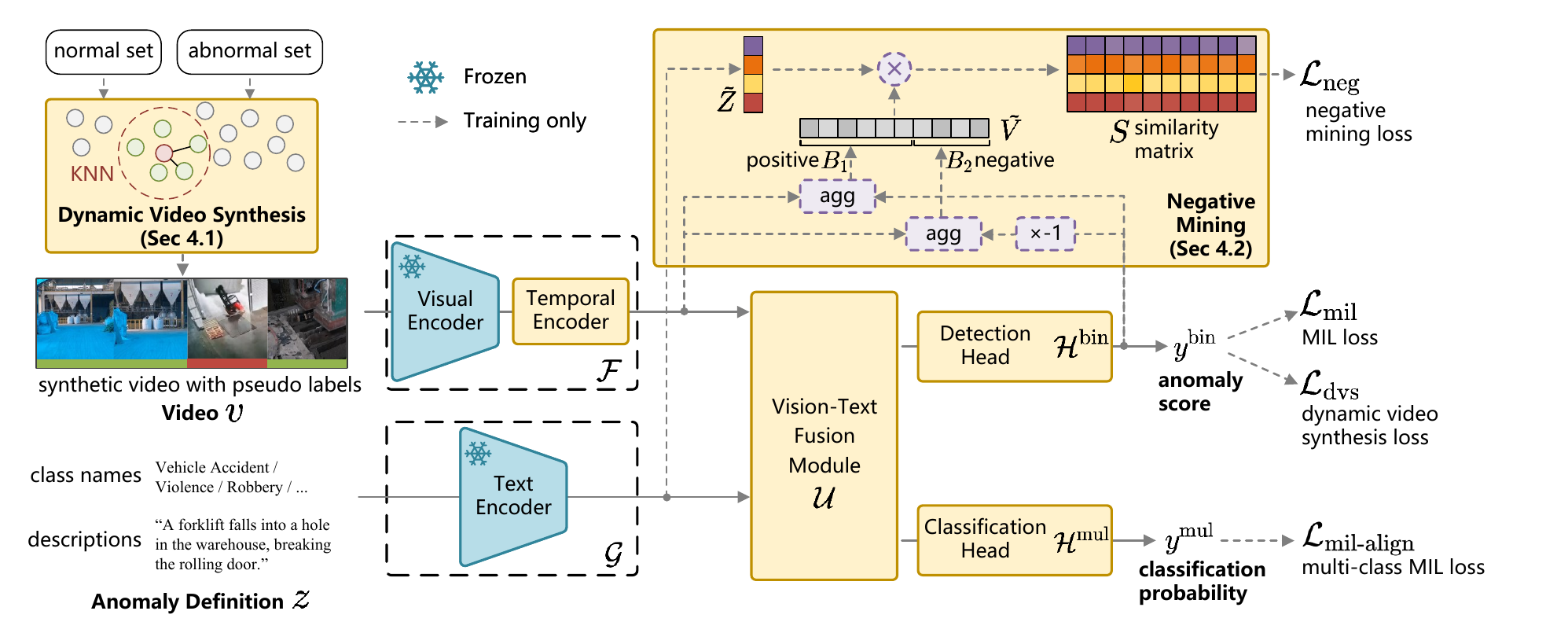}
    \caption{Architecture of our proposed LaGoVAD, which implement Eq.~\ref{eq:ours} by adding an anomaly definition branch ($z\rightarrow \mathcal{G} \rightarrow \mathcal{U}$). The model is trained with two novel regularization strategies: dynamic video synthesis $\mathcal{L}_{\text{dvs}}$ (\ref{sec:dys-module}) and contrastive learning loss with negative mining $\mathcal{L}_{\text{neg}}$ (\ref{sec:neg-module}).}
    \label{fig:LaGoVAD}
\end{figure}

\section{Method: LaGoVAD}
We implement the language-guided VAD paradigm (Eq.~\ref{eq:ours}) via LaGoVAD.
We first introduce the overall architecture, followed by details of two proposed regularization terms.

As illustrated in Fig.~\ref{fig:LaGoVAD}, we take video $v$ and anomaly definition $z$ as inputs. The video is synthesized by a non-parametric dynamic video synthesis module. The anomaly definition is a category set $z= \{ z_0, z_1, \dots, z_{C-1} \}$, where each class $z_i$ is defined by a class name or a description and $C$ is the number of categories in a certain definition. 
During training, we randomly choose either the class names or the anomaly descriptions within a batch as the definition. A video is considered as normal when the description does not belong to it.
We extract and encode features of videos with $\mathcal{F}$, which includes a pretrained CLIP image encoder \citep{CLIP2021RN153} and a Transformer-based temporal encoder. And the text features are extracted with CLIP text encoder $\mathcal{G}$.
Then, the encoded features are fused by a Transformer-based fusion module $\mathcal{U}$.
Finally, the fused features are fed into a binary detection head $\mathcal{H}^{\text{bin}}$ to obtain the anomaly score $y^{\text{bin}} \in \mathbb{R}^{L \times 1}$ and a multi-classification head $\mathcal{H}^{\text{mul}}$  to obtain the classification probability $y^\text{mul}  \in \mathbb{R}^{L \times C}$, where $L$ is the length of video.
Formally,
\begin{equation}
    [v,y^{\text{p}}]=\text{Synthesis}(N,A),
\end{equation}
\begin{equation}
    v^t=\mathcal{F}(v), \quad z^t=\mathcal{G}(z), \quad [v^u,z^u] = \mathcal{U}(v^t,z^t),
\end{equation}
\begin{equation}
    y^{\text{bin}} = \mathcal{H}^{\text{bin}}(v^u), 
    \quad 
    y^{\text{mul}} = \mathcal{H}^{\text{mul}}(v^u, z^u),
\end{equation}
where $N,A$ are normal and abnormal video sets, $\text{Synthesis}(\cdot, \cdot)$ is the dynamic video synthesis module, $y^{\text{p}}$ is the pseudo-label generated during synthesis, $v^t,z^t$ are encoded features and $v^u,z^u$ are fused features.

During training, we optimize the model through four losses under weak supervision.
Following \citep{ovvad,vadclip}, we calculate multiple instance learning loss $\mathcal{L}_\text{MIL}$ (with $y^{\text{bin}}$) and MIL-align loss  $\mathcal{L}_{\text{MIL-align}}$ (with $y^{\text{mul}}$) to optimize temporal binary detection and video-level multi-class classification.
Our paradigm operates in multimodal joint spaces $P(V,Z,Y)$ that inherently suffer from exponentially decaying sample density, thereby inducing overfitting problems. Specifically, the algorithm may establish a wrong mapping or suppress a certain modality.
Therefore, we leverage more diverse videos via a dynamic video synthesis loss $\mathcal{L}_{\text{dvs}}$ to learn better mappings.
We also incorporate a contrastive learning loss with hard negative mining $\mathcal{L}_{\text{neg}}$ for better alignment.
Formally,
\begin{equation}
    \mathcal{L} =
    \mathcal{L}_{\text{MIL}} + 
    \mathcal{L}_{\text{MIL-align}} +
    \mathcal{L}_{\text{dvs}} + 
    \mathcal{L}_{\text{neg}}.
\end{equation}

This work prioritizes addressing the challenge of concept drift over designing complex architectures. Consequently, we adopt a simple but effective network. The proposed two regularizers are independent in design, which could be seamlessly integrated into more sophisticated architectures.

\subsection{Dynamic Video Synthesis}
\label{sec:dys-module}
In real-world scenarios, anomalies typically occupy only a small portion of a lengthy video, whereas current datasets predominantly contain videos with high anomaly ratios due to web-sourced data limitations. To mitigate this bias, we dynamically synthesize videos with varying durations and compute a loss based on the pseudo label generated during synthesis.
The module initially determines whether to generate a normal or abnormal video, followed by specifying the number of segments. In particular, when the number is 1, it indicates that no synthesis is performed. It then selects an anchor video and randomly selects similar videos from k-nearest neighbors to construct a semantically coherent sequence, where the anchor's position is transformed to a binary pseudo label $y^{\text{p}} \in {\{0,1\}}^{L}$, where  $L$ denotes the feature length. 
The visual representation used for semantic retrieval remains unchanged from that of the backbone, since segments retrieved with high similarity are largely indistinguishable from the model’s perspective.
Notably, the distance metrics required for retrieval are pre-computed, effectively reducing computational overhead during training.
Finally, a dynamic video synthesis is calculated as:
\begin{align}
    \mathcal{L}_{\text{dvs}} = 
    &-\hat{y} \log \textstyle\sum_{i \in \Omega^a_k} \sigma(y^{\text{bin}}_i) / k 
    -(1-\hat{y}) \log (1 - \textstyle\sum_{i \in \Omega^n_k} \sigma(y^{\text{bin}}_i) / k) \\
    &- \textstyle\sum_i^L y^{\text{p}}_i \log \sigma(y^{\text{bin}}_i) / L,
\end{align}
where $\sigma$ denotes the Sigmoid function, $\hat{y}$ denotes the video-level ground truth, $\Omega^a_k$ and $\Omega^n_k$ are indices of Top-K scores of synthetic abnormal and normal videos, respectively.

\subsection{Contrastive Loss with Hard Negative Mining}
\label{sec:neg-module}
Given the ambiguous boundary between normal and abnormal frames in anomaly videos, we incorporate contrastive learning with hard negative mining as a regularization term to enhance their discriminability.
Specifically, we first aggregate the frame-level visual features into video-level features with binary abnormal scores as weights:
\begin{equation}
    \Tilde{v}^{\text{pos}} = \sum_i^L v^t_i 
    \frac{\exp{(y^{\text{bin}}_i / \eta)}}{\sum_j^L \exp{(y^{\text{bin}}_j / \eta)}}, \quad
    \Tilde{v}^{\text{neg}} = \sum_i^L v^t_i 
    \frac{\exp{(-y^{\text{bin}}_i / \eta)}}{\sum_j^L \exp{(-y^{\text{bin}}_j / \eta)}},
\end{equation}
where $v^t_i$ denotes the $i$-th feature in $v^t$, $\eta$ denotes the temperature, $\Tilde{v}^{\text{pos}}, \Tilde{v}^{\text{neg}}$ denote the aggregated foreground/background feature. The background feature in an abnormal video is the normal part of it, which could be considered as the hard negative to its corresponding anomaly description. And the selection of hard negatives can be adjusted through the temperature coefficient $\eta$.
Then, we obtain $\Tilde{v}^{\text{pos}}$ of all samples and $\Tilde{v}^{\text{neg}}$ of only abnormal samples in a batch, forming $\Tilde{V} \in \mathbb{R}^{(B_1+B_2)\times E}$, where $B_1$ is the batch size, $B_2$ is the number of abnormal videos in a batch, and $E$ is the feature dimension. 
We also obtain the text features before fusing, forming $\Tilde{Z} \in \mathbb{R}^{B_2\times E}$. The contrastive loss is as follows:
\begin{equation}
    \mathcal{L}_{t \rightarrow v} = -\sum^{B_2}_i \log 
    \frac{ \exp{(S_{i,i} / \tau)} }{ \sum^{B_1+B_2}_{j} \exp{(S_{j,i} / \tau)} },
    \quad
    \mathcal{L}_{v \rightarrow t} = -\sum^{B_2}_j \log 
    \frac{ \exp{(S_{j,j} / \tau)} }{ \sum^{B_2}_{i} \exp{(S_{i,j} / \tau)} },
\end{equation}
\begin{equation}
    \mathcal{L}_{\text{neg}} = \mathcal{L}_{t \rightarrow v} + \mathcal{L}_{v \rightarrow t},
\end{equation}
where $S = \mathop{Norm}(\Tilde{V}) \times \mathop{Norm}(\Tilde{Z})$, $\mathop{Norm}$ is L2 normalization and $\tau$ denotes the temperature.

During inference, the user can input either descriptions or class names as the anomaly definition. For the classification head, we select the minimum value of the normal class and the maximum value of the abnormal class over the temporal axis and use these values after applying Softmax as probabilities.
More architecture details are provided in \textit{supp} (Sec.~\ref{sec:supp-architecture}).

\section{Dataset: PreVAD}
\label{sec:dataset}

\subsection{Data Curation Pipeline}
We propose PreVAD—a large-scale pretraining VAD dataset to provide diverse $(v,z,y)$ triples for training, which is collected through a scalable curation pipeline. The proposed pipeline encompasses three stages: source, cleansing, and annotation.

We aggregate videos from three sources:
First, we retrieve anomaly videos from existing large-scale video-text datasets \citep{MSR-VTT,VATEX,VALOR,VIDAL10M} utilizing their text annotation.
Second, we expand the collection through curated web resources, including 1) accident compilations and fail videos; 2) driving and travel vlogs; 3) violence recognition videos.
Last, we obtain normal surveillance videos from YouTube streams and traffic camera streams.

In the cleansing stage, we first remove irrelevant segments such as intros and outros with automatic tools. Next, a multimodal LLM (MLLM) generates detailed video descriptions, and a vision-language model verifies the consistency between the descriptions and video. Finally, an LLM evaluates the descriptions to confirm the presence of anomalies, decreasing hallucinations.

In the annotation stage, we employ a hybrid human-AI approach. First, we annotate each video with a category label. Then, using this label as a constraint, we prompt an MLLM to generate fine-grained descriptions of the anomalies, ensuring focused and relevant output. We also conduct frame-level annotations for the validation set.
Notably, we do not additionally label a test set, as we will conduct zero-shot evaluations on other existing VAD datasets.
More details can be found in the \textit{supp} Sec.~\ref{sec:supp-prevad-details}.

\begin{figure}[t]
    \centering
    \includegraphics[width=\linewidth]{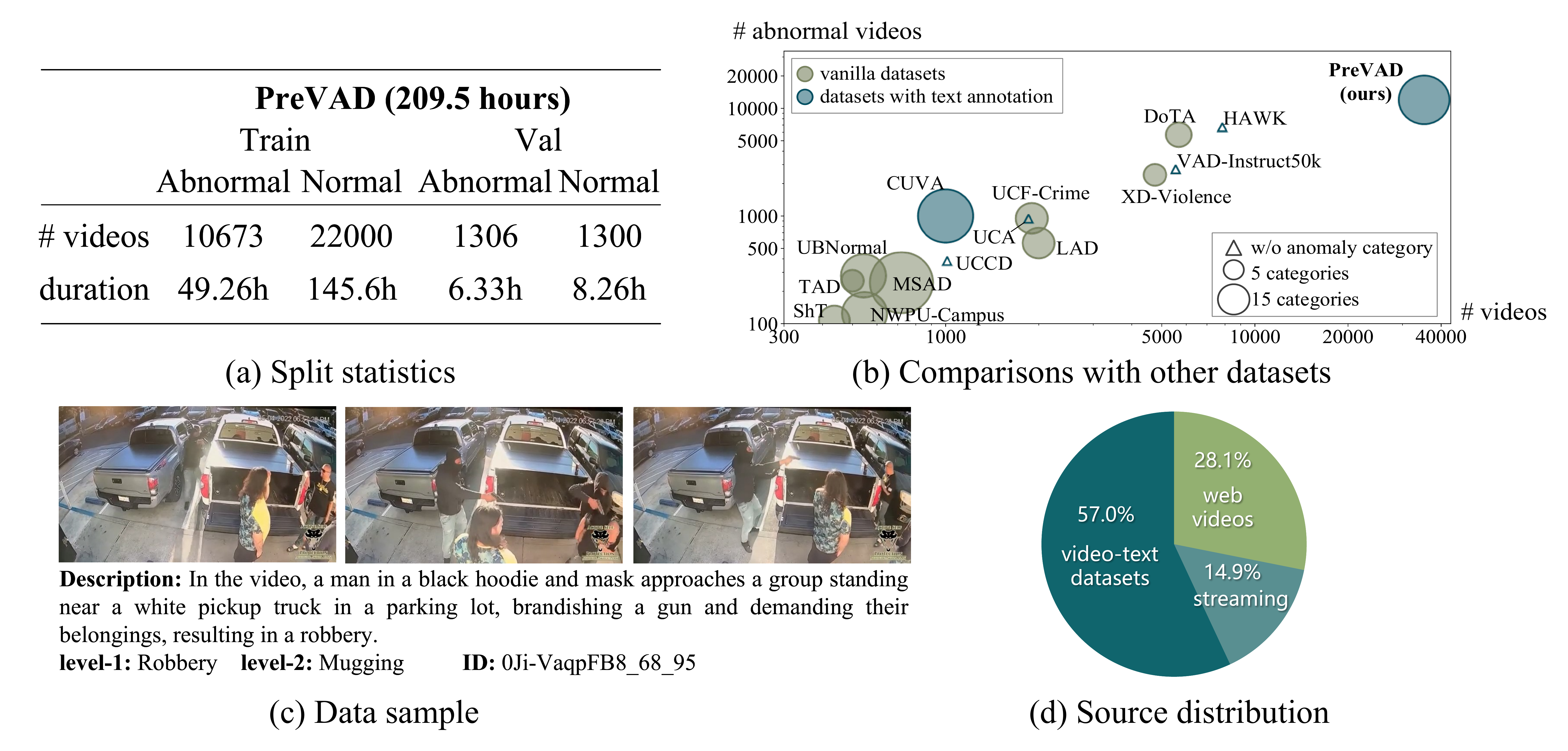}
    \caption{The statistics, comparisons and a data sample of the proposed PreVAD dataset.}
    \label{fig:dataset-statistics}
    \vspace{-1em}
\end{figure}

\subsection{Dataset Statistics}
Our dataset stands out for its large scale, wide variety of anomalies, and high-quality descriptions.

\textbf{Scale.}
PreVAD comprises 35,279 videos, spanning a total duration of 209.5 hours, with 11,979 abnormal videos and 23,300 normal videos, partitioned as shown in Fig.~\ref{fig:dataset-statistics}a, which is the largest video anomaly dataset up to now, as compared in Fig.~\ref{fig:dataset-statistics}b.

\textbf{Anomaly Types.}
Our dataset's diversity stems from a hierarchical taxonomy with 7 first-level categories (\ie, \textit{Violence, Vehicle Accident, Fire-related Accident, Robbery, Daily Accident, Animal-related Violence, Production Accident}) and 35 subcategories (\eg, \textit{carjacking, mugging, sport fail, war}), spanning from minor (\eg, fall to the ground) to severe (\eg, shooting) anomalies, which covers a broad range of scenarios.

\textbf{Anomaly Descriptions.}
Each abnormal video is annotated with a text description, which has a total vocabulary size of 5,298 words and an average of 22.9 words per description. As shown in Fig.~\ref{fig:dataset-statistics}c, our annotation accurately describes the abnormal objects and behaviors in a fine-grained manner.

\textbf{Sources.}
As shown in Fig.~\ref{fig:dataset-statistics}d, most of the videos are from existing video-text datasets or streaming, significantly reducing the overhead of manual clipping and retrieval.
PreVAD also obtains videos independently without merging existing VAD datasets, enabling cross-dataset validation as a new generalization benchmark.
We provide more details of PreVAD in the \textit{supp} Sec.~\ref{sec:supp-prevad-details}.

\section{Experiments}
\label{sec:exp}

\subsection{Experiment Setup}
\paragraph{Datasets}
We conduct comprehensive zero-shot evaluations across seven datasets: UCF-Crime (UCF) \citep{ucf-crime}, XD-Violence (XD) \citep{xdviolence}, MSAD \citep{msad}, UBNormal (UBN) \citep{ubnormal}, DoTA \citep{DoTA}, TAD \citep{TAD}, and LAD \citep{LAD}, which encompass diverse anomaly types. The validation set of our proposed PreVAD is also utilized for in-domain analysis and ablations. More details are in \textit{supp} Sec.~\ref{sec:supp-test-sets}.

\paragraph{Evaluation Protocols}
We evaluate the open-world capability with two zero-shot protocals:
\textbf{Protocol 1}: Testing on multiple test sets separately, each representing a distinct scenario (\eg, TAD for traffic scenarios), which evaluates the overall performance under concept drift, unseen categories, feature distribution shifts, etc.
\textbf{Protocol 2}: Testing on a dataset with varying anomaly definitions, where in each definition only a subset of anomaly categories is considered as abnormal. Such variations between subsets simulate variable user requirements in real-world applications. The final performance is averaged across five such definitions (denoted as drift@5), specifically evaluating the model’s robustness to concept drift. 
The differences of test sets and the selected subsets are detailed in \textit{supp} Sec.~\ref{sec:supp-protocols}.
During evaluation, we use manual designed prompts based on the class name of the corresponding dataset as the anomaly definition.

\paragraph{Comparative Methods}
For Protocol 1, we compare against traditional methods (PEL \citep{PEL}, VadCLIP \citep{vadclip}), along with scene-dependent (CMRL \citep{Look-Around}), zero-shot (LAVAD \citep{LAVAD}), open-vocabulary (OVVAD \citep{ovvad}), and multi-domain generalization (MultiDomain \citep{MultiDomain}) approaches. We also include open-vocabulary action recognition methods (ActionCLIP \citep{ActionCLIP}, ViFi-CLIP \cite{ViFi-CLIP}) for multi-class comparison.
For Protocol 2, as most methods do not support user-provided anomaly definition, comparisons are primarily made with LLM-based (Qwen2-VL \citep{qwen2-vl}, Qwen2.5-VL \citep{qwen2_5-vl}, LAVAD, HolmesVAU \cite{Holmes-VAU}) and multi-modal methods (VadCLIP).
All results are based on their open-source codes and weights, detailed in Sec.~\ref{sec:supp-reproduced-methods}.

\paragraph{Metrics}
For binary detection metrics and without additional annotations, we follow existing works using Average Precision (AP) for XD-Violence and using Area Under the Curve of the frame-level receiver operating characteristic (AUC) for other datasets. 
For multi-class classification metrics, we use multi-class accuracy and F1-score on both abnormal and normal videos.

Details of implementation of our method are in \textit{supp} Sec.~\ref{sec:supp-implementation}.

\subsection{Comparison with State-of-the-Arts}
Under the comprehensive evaluation of Protocol 1 (Tabs.~\ref{tab:exp-overall},\ref{tab:exp-classification}), our approach surpasses others across all datasets, which includes comparisons with related methods in open-vocabulary setting (OVVAD, ActionCLIP, ViFi-CLIP), cross-domain setting (MultiDomain), or scene-dependent setting (CMRL). Notably, on XD-Violence, it achieves improvements of 20\% and 32\% in detection and  classification, respectively.
Under the concept drift evaluation of Protocol 2 (Tab.~\ref{tab:exp-drift}), LaGoVAD achieves better performance than multi-modal methods and LLM-based methods, while avoiding the significant computational overhead from huge parameters.

\begin{figure}[t]
\centering
\captionof{table}{Comparison in temporal binary anomaly detection under Protocol 1. Results marked with $\dagger$ are taken from their publications and results marked with $\ddagger$ are from \citet{LAVAD}.}
\label{tab:exp-overall}

\adjustbox{valign=c}{%
\begin{minipage}{0.33\textwidth}
\centering
\includegraphics[width=0.8\linewidth]{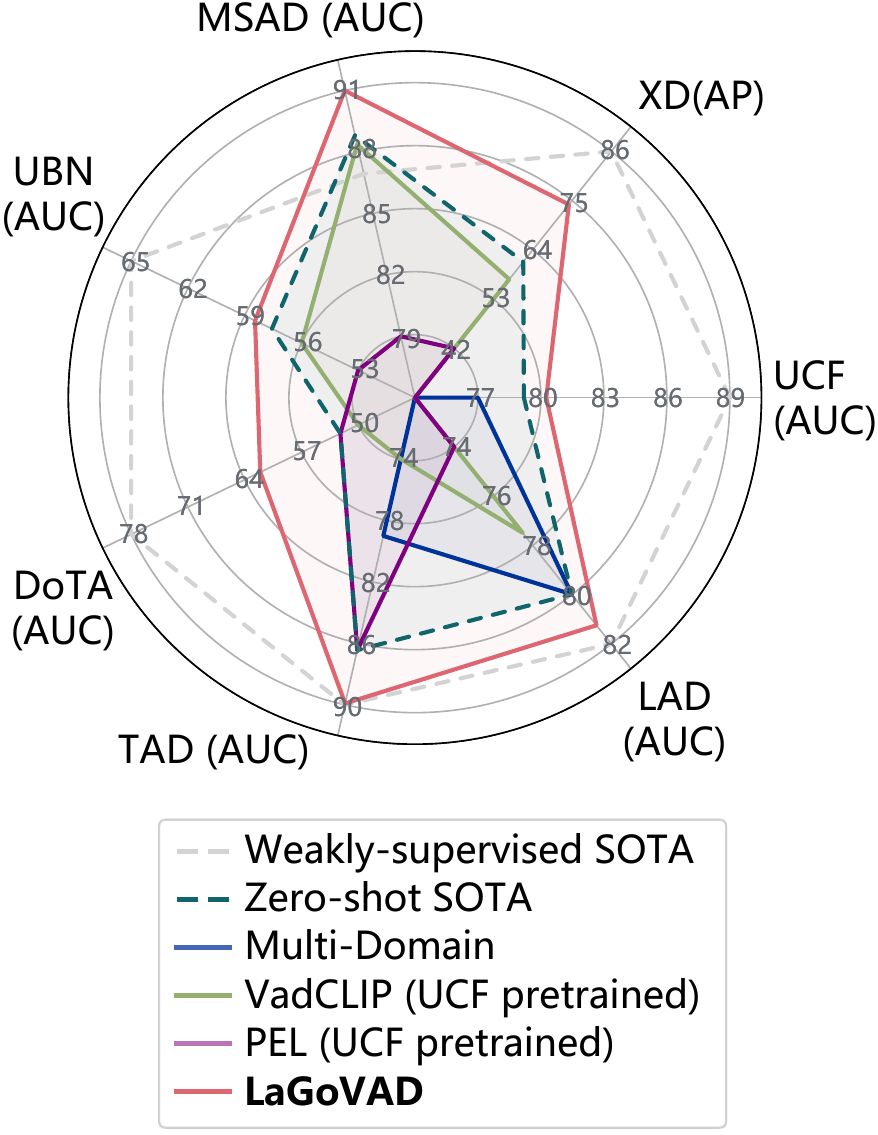}
\end{minipage}%
}\hfill
\adjustbox{valign=c}{%
\begin{minipage}{0.65\textwidth}
\centering
\setlength{\tabcolsep}{2mm}{
\resizebox{\linewidth}{!}{%

\begin{tabular}{@{}llccccccc@{}}
\toprule
\multirow{3}{*}{Methods} & \multirow{3}{*}{Training-set} & \multicolumn{7}{c}{Test-set}                          \\
                         &                               & UCF   & XD    & MSAD  & UBN   & DoTA  & TAD   & LAD   \\ 
                         &                               & (AUC) & (AP)  & (AUC) & (AUC) & (AUC) & (AUC) & (AUC) \\ \midrule
OVVAD$^\dagger$          & AIGC+XD                       & 82.42 & -     & -     & -     & -     & -     & -     \\
\textbf{LaGoVAD}         & \textbf{PreVAD}+XD            & 82.81 & -     & -     & -     & -     & -     & -     \\
OVVAD$^\dagger$          & AIGC+UCF                      & -     & 63.74 & -     & -     & -     & -     & -     \\
\textbf{LaGoVAD}         & \textbf{PreVAD}+UCF           & -     & 76.28 & -     & -     & -     & -     & -     \\ \midrule
CLIP$^\ddagger$          & -                             & 53.16 & 17.83 & -     & -     & -     & -     & -     \\
LLaVA1.5$^\ddagger$      & -                             & 72.84 & 50.26 & -     & -     & -     & -     & -     \\
LAVAD$^\dagger$          & -                             & \underline{80.28} & \underline{62.01} & -     & -     & -     & -     & -     \\
CMRL$^\dagger$           & UCF                           & -     & 46.74 & -     & -     & -     & -     & -     \\
MultiDomain$^\dagger$    & Multiple                      & 78.55 & -     & -     & -     & -     & 79.21 & \underline{77.36} \\
PEL                      & UCF                           & -     & 43.53 & 79.82 & 54.02 & \underline{53.05} & \underline{86.27} & 69.99 \\
PEL                      & XD                            & 54.52 & -     & 68.25 & 49.55 & 44.97 & 43.02 & 30.82 \\
VadCLIP                  & UCF                           & -     & 58.29 & 88.09 & 56.24 & 50.93 & 74.46 & 74.29 \\
VadCLIP                  & XD                            & 80.16 & -     & \underline{88.48} & \underline{57.41} & 49.00 & 83.56 & 74.46 \\ \midrule
VadCLIP                  & \textbf{PreVAD}               & 79.37 & 67.43 & 89.79 & 55.66 & 50.59 & 85.96 & 75.02 \\
\textbf{LaGoVAD}         & \textbf{PreVAD}               & \textbf{81.12} & \textbf{74.25} & \textbf{90.41} & \textbf{58.07} & \textbf{62.60} & \textbf{89.56} & \textbf{78.91} \\ \bottomrule
\end{tabular}
}
}
\end{minipage}%
}
\end{figure}

\subsection{Ablation Studies}

\begin{table}[t]
\centering
\begin{minipage}[t]{0.44\textwidth}
\caption{Comparison in video-level multi-class classification on UCF-Crime and XD-Violence under Protocol 1. All the methods employ the same CLIP variant.}
\label{tab:exp-classification}
\centering
\resizebox{\linewidth}{!}{%
\begin{tabular}{@{}llcccc@{}}
\toprule
\multicolumn{1}{c}{\multirow{2}{*}{Method}} & \multicolumn{1}{c}{\multirow{2}{*}{Training}} & \multicolumn{2}{c}{UCF} & \multicolumn{2}{c}{XD} \\ \cmidrule(l){5-6} \cmidrule(l){3-4}
\multicolumn{1}{c}{} & \multicolumn{1}{c}{} & Acc.   & F1    & Acc.   & F1    \\ \midrule
CLIP                 & -                    & 19.31 & 12.08 & 56.25 & 45.04 \\
ActionCLIP           & K400                 & 18.62 & 16.12 & 38.75 & 37.11 \\
ViFi-CLIP            & K400                 & 20.34 & 15.67 & 53.75 & 50.33 \\
VadCLIP              & UCF                  & -     & -     & 46.38 & 26.16 \\
VadCLIP              & XD                   & 38.28 & 10.52 & -     & -     \\ \midrule
VadCLIP              & \textbf{PreVAD}               & 45.52 & \textbf{17.81} & 71.38 & 57.99     \\
\textbf{LaGoVAD}              & \textbf{PreVAD}               & \textbf{51.72} & 16.64 & \textbf{78.13} & \textbf{63.80} \\ \bottomrule
\end{tabular}%
}
\end{minipage}
\hfill
\begin{minipage}[t]{0.50\textwidth}
\caption{Comparison in temporal binary anomaly detection on XD and MSAD under Protocol 2, specifically evaluating robustenss to concept drift. The model marked with $\dagger$ is trained on PreVAD.}
\label{tab:exp-drift}
\centering
\setlength{\tabcolsep}{3.75mm}{
\resizebox{\linewidth}{!}{%
\begin{tabular}{@{}lcccc@{}}
\toprule
\multirow{2}{*}{Method} & \multicolumn{2}{c}{XD-drift@5} & \multicolumn{2}{c}{MSAD-drift@5} \\ \cmidrule(l){2-3} \cmidrule(l){4-5}
                    & AUC    & AP     & AUC    & AP     \\ \midrule
Qwen2-VL-7B         & 60.4   & 17.5   & 65.4   & 22.9   \\
Qwen2.5-VL-7B       & 62.7   & 20.6   & 63.1   & 22.4   \\
VadCLIP $^\dagger$  & 81.3   & 35.8   & 85.2   & 18.8   \\
HolmesVAU           & -      & -      & 84.3   & 34.3   \\
LAVAD               & 81.7   & 34.8   & 72.2   & 31.5   \\ \midrule
\textbf{LaGoVAD}    & \textbf{85.7} & \textbf{37.1} & \textbf{85.6} & \textbf{40.1} \\ \bottomrule
\end{tabular}%
}
}
\end{minipage}
\end{table}

\begin{wraptable}[10]{r}{0.5\textwidth}
\vspace{-3.5em}
\caption{Ablation on each component. Lang-guided: language guiding. Det. Avg.: average zero-shot temporal detection performance on seven datasets. Cls. Avg.: average zero-shot multi-classification performance on UCF and XD.}
\label{tab:exp-abl-modules}
\centering
\setlength{\tabcolsep}{4pt} %
\resizebox{\linewidth}{!}{
\begin{tabular}{@{}cccccc@{}}
\toprule
Lang-guided & $\mathcal{L}_{\text{dvs}}$ & $\mathcal{L}_{\text{neg}}$ & PreVAD & Det. Avg. & Cls. Avg. \\ 
\midrule
\checkmark & \checkmark & \checkmark & \textbf{69.98} & \textbf{76.42} & \textbf{52.57} \\
\checkmark &            & \checkmark & 65.73 & 73.51 & 51.73 \\
\checkmark & \checkmark &            & 68.92 & 73.96 & 51.85 \\
\checkmark &            &            & 67.35 & 71.31 & 48.81 \\
           & \checkmark & \checkmark & 69.87 & 73.84 & 46.23 \\ 
\bottomrule
\end{tabular}
}
\end{wraptable}

\paragraph{Module Effectiveness}
We report ablation studies in Tab.~\ref{tab:exp-abl-modules}. Removing either $\mathcal{L}_{\text{dvs}}$ or $\mathcal{L}_{\text{neg}}$ led to a noticeable degradation in detection and classification performance. When both are removed, the model exhibits a significant decline in zero-shot performance.
When disabling the language guidance, we followed approaches in \citep{vadclip,ovvad} to place the fusion module after the detection stage, which does not condition detection results on the given text.
Experiment shows that without language guidance, cross-domain performance decreased significantly, which indicates that the conventional paradigm lack the capacity to incorporate user-defined guidance for detection, thereby limiting their adaptability to open-world scenarios.
More ablations of modules and data are in the \textit{supp} Sec.~\ref{sec:supp-more-results}.

\paragraph{Dataset \& Architecture Effectiveness}
To quantify dataset and architecture impacts, we compare VadCLIP trained on PreVAD.
The results reveal that the model trained on PreVAD outperforms the one trained on UCF-Crime by 14\% in detection (average metric on six other datasets) and 88\% in classification (average metric on XD-Violence) while also surpassing the one trained on XD-Violence by 7.6\% in detection and 44\% in classification, respectively. This substantial margin validates that a larger and more diverse dataset can significantly improve zero-shot performance.
When trained on PreVAD, our LaGoVAD achieves consistent improvements over VadCLIP, with gains of 7.2\% in average detection performance across seven datasets and 2.8\% in classification across on two datasets. This confirms the superiority of our approach in open-world scenarios.

\subsection{Qualitative Results}
Fig.~\ref{fig:exp-vis-main} visualizes the performance under concept drift. The conventional method (VadCLIP) fails to handle dynamic definitions, producing same scores under the training definition. Although LLM based methods (LAVAD, Qwen2.5-VL) can take definition prompts, LAVAD fails to recognize the anomaly due to its limited understanding of dynamic events. Qwen2.5-VL recognizes it but cannot localize it precisely.
Our method, in contrast, adapts to the dynamic definition and achieves precise anomaly localization.

\begin{figure}[h]
    \centering
    \includegraphics[width=0.95\linewidth]{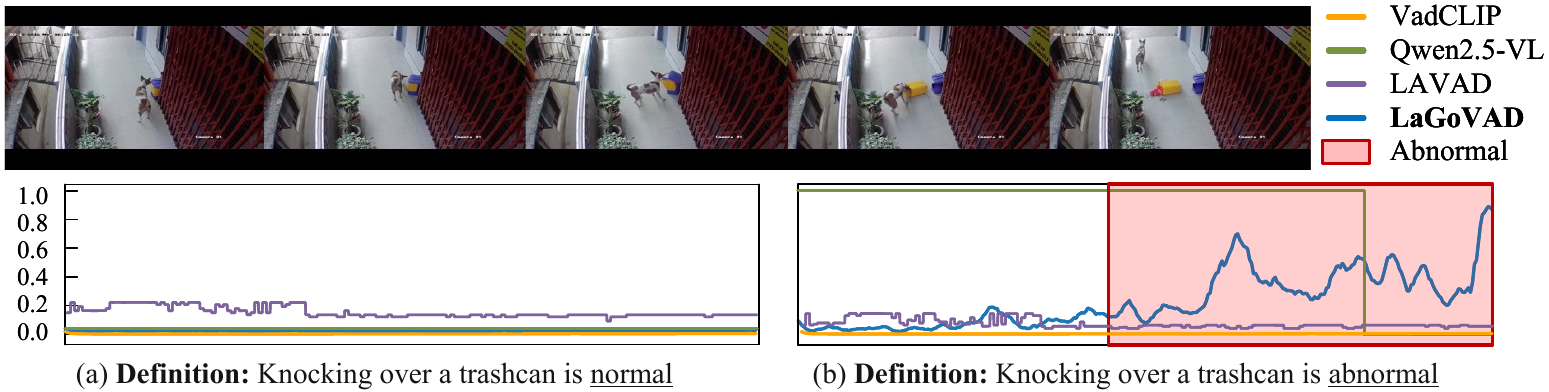}
    \caption{Visualization of different methods under concept drift. \textit{Knocking over a trashcan} is considered normal in (a) but abnormal in (b). All models are prompted with the corresponding definition.}
    \label{fig:exp-vis-main}
\end{figure}

\section{Conclusion}
In this work, we propose a novel paradigm, language-guided open-world video anomaly detection, to deal with concept drift in the open-world scenario. 
It assumes that the definition of anomaly is dynamic and models it as a stochastic variable input to the network.
To support training this model, we build a large-scale video anomaly dataset that is annotated by multi-level taxonomy and anomaly descriptions.
We empirically verify the effectiveness of the proposed framework through state-of-the-art zero-shot performance and sufficient ablations on seven datasets.

\subsubsection*{Acknowledgement}
This work is supported by the Fundamental Research Funds for the Central Universities (No.CUC25GT29,CUC25SG012,CUC25SG008,CUC25QT17).

\bibliography{main}
\bibliographystyle{iclr2026_conference}

\appendix

\renewcommand\thefigure{\Alph{figure}} 
\setcounter{figure}{0}
\renewcommand\thetable{\Alph{table}} 
\setcounter{table}{0}
\renewcommand{\thesection}{\Alph{section}}
\setcounter{section}{0}

\clearpage
\section{Declaration of LLM Usage}
LLM is only used for writing, editing, and formatting, which does not impact the core methodology, scientific rigorousness, or originality of this research.
\section{Limitation and Future Work}
While our work introduces a novel paradigm for addressing concept drift in open-world video anomaly detection, we acknowledge that limitation exists. The current architecture instantiates the paradigm through a simplified design, leaving room for architectural refinements to better capture temporal dependencies and multimodal interactions.
For future work, researchers may establish comprehensive benchmarks beyond the current evaluation to systematically evaluate prompt compliance and open-world capability.
Moreover, through our proposed pipeline, a larger dataset could be collected to further boost the performance.

\section{Details of LaGoVAD}

\subsection{Architecture}
\label{sec:supp-architecture}

\paragraph{Dynamic Video Synthesis}

In practical scenarios, abnormal events typically occupy a small proportion of the total video duration. However, since people often edit videos to highlight events of interest before uploading them to the Internet, web-sourced videos generally exhibit high anomaly ratios. For instance, the test sets of MSAD, PreVAD, and LAD contain 42\%, 39\%, and 38\% of videos, respectively, where abnormal events occupy over 70\% of the total duration. This data characteristic leads to the loss of normal contextual information, consequently hindering the model's ability to learn normal-anomaly boundaries. Our proposed dynamic video synthesis addresses this issue by concatenating semantically similar video segments to reconstruct normal contextual information, with the detailed workflow described as follows.

\begin{algorithm}[!t]
    \renewcommand{\algorithmicrequire}{\textbf{Input:}}
    \renewcommand{\algorithmicensure}{\textbf{Output:}}
    \caption{Dynamic Video Synthesis Module}
    \label{alg:dvs}
    \begin{algorithmic}[1]
        \Require Normal video set $N$, 
        Abnormal video set $A$,
        Synthesis probability $\theta \in [0, 1]$, 
        Normal video probability $\alpha \in [0, 1]$, 
        Maximum number of segments $\delta_m \in \mathbb{N}^+$,
        Number of nearest neighbors $n \in \mathbb{N}^+$
        \Ensure Synthesized video segment sequence $v$\\
        \textbf{Initialize:} video sequence $v$\\

        Sample $p_1, p_2 \sim U(0,1)$
        \If{$p_1 > \theta$}
            \State $m \leftarrow 1$
        \Else
            \State $m \leftarrow \text{Randint}(1, \delta_m)$
        \EndIf
        
        \If{$p_2 > \alpha$}
            \State Sample $\tilde{v}$ from $A$
        \Else
            \State Sample $\tilde{v}$ from $N$
        \EndIf
        
        \State $j \leftarrow \text{Randint}(1, m)$
        \For{$i = 1$ to $m$}
        {
            \If{$i = j$}
                \State Append $\tilde{v}$ to $v$\;
            \Else
                \State Sample $v_n$ from $\text{NearestNeighbor}(N, \tilde{v}, n)$
                \State Append $v_n$ to $v$\;
            \EndIf
        }
        \EndFor
        
        \State\Return $v$\
 \end{algorithmic}
\end{algorithm}

\begin{figure}[htbp]
    \centering
    \includegraphics[width=\linewidth]{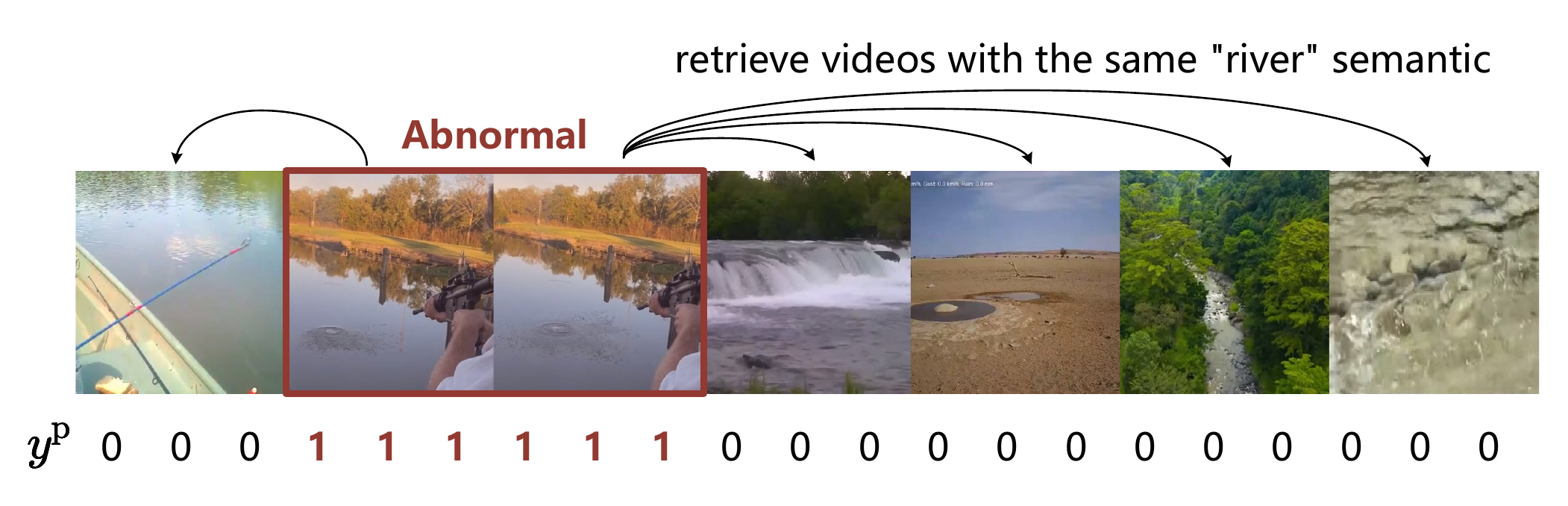}
    \caption{An example of a synthesized video with our proposed dynamic video synthesis.}
    \label{fig:supp-dvs-sample}
\end{figure}

Dynamic video synthesis comprises a synthesis module and its corresponding pseudo-label loss. Algorithm~\ref{alg:dvs} illustrates the workflow of our synthesis module. The process begins by randomly selecting either a normal or abnormal video as the anchor $\tilde{v}$ (determining whether the synthesis target is normal or abnormal). Subsequently, it randomly determines the number of synthetic segments $m$ and their insertion positions $j$. The remaining positions are then populated by randomly selecting videos from the $N$ nearest neighbors in the normal video set. For nearest neighbor computation, we employ \texttt{CLIP-ViT-B/16} features extracted from the central frame of each video, using cosine similarity as the distance metric. Unlike the 10-crop approach that pre-generates the augmented data, our method dynamically synthesizes samples during training. To accelerate training efficiency, we pre-compute the $N$ nearest neighbors for each sample in advance.

Since the positions of abnormal segments are known during synthesis, we sequentially construct pseudo-labels $y^{\text{p}} \in \mathbb{R}^{L}$, where $L$ denotes the total length of the synthesized video features. For synthesized normal videos, all elements in $y^{\text{p}}$ are set to 0. For synthesized abnormal videos, $y^{\text{p}}$ takes the value 1 in abnormal intervals and 0 elsewhere. Fig.~\ref{fig:supp-dvs-sample} shows an example. Using these pseudo-labels, we compute the following loss function:
\begin{align}
    \mathcal{L}_{\text{dvs}} = 
    &-\hat{y} \log \textstyle\sum_{i \in \Omega^a_k} \sigma(y^{\text{bin}}_i) / k \\
    &-(1-\hat{y}) \log (1 - \textstyle\sum_{i \in \Omega^n_k} \sigma(y^{\text{bin}}_i) / k) \\
    &- \textstyle\sum_i^L y^{\text{p}}_i \log \sigma(y^{\text{bin}}_i) / L,
\end{align}
where $\sigma$ denotes the Sigmoid function, $\hat{y}$ denotes the video-level ground truth, $L$ denotes the feature length, $\Omega^a_k$ and $\Omega^n_k$ are indices of Top-K scores of synthetic abnormal and normal videos, respectively. The first two terms can be viewed as variants of MIL loss that constrain the selection of abnormal instances to regions with pseudo-label value 1, while the last term directly leverages pseudo-labels for supervised learning.

\paragraph{Temporal Encoder}
We employ a vanilla Transformer with rotary positional encoding \citep{RoFormer} as our temporal encoder.

\begin{figure}[t]
    \centering
    \includegraphics[width=0.7\linewidth]{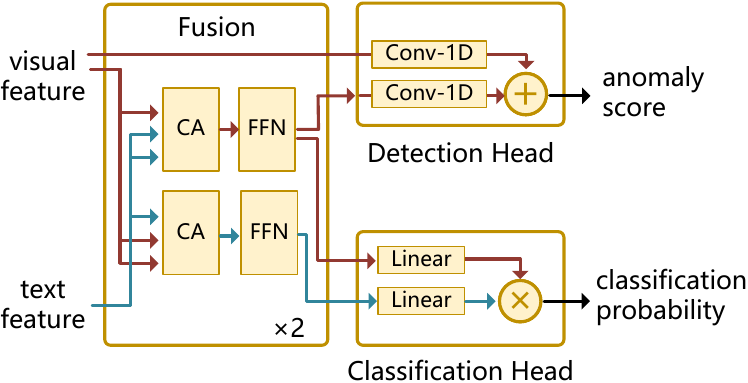}
    \caption{Detail architecture of fusion module and heads.}
    \label{fig:supp-detail-fusion-heads}
\end{figure}

\paragraph{Fusion}
As illustrated in Fig.~\ref{fig:supp-detail-fusion-heads}, our vision-text modality fusion adopts a simple co-attention Transformer architecture. Each modality branch contains a cross-attention layer (CA) followed by a feed-forward network (FFN), where the current modality serves as the query in the cross-attention while the other modality provides keys and values.

\paragraph{Heads}
As illustrated in Fig.~\ref{fig:supp-detail-fusion-heads}, the temporal temporal detection head utilizes a 1D convolutional network with replicate padding, which processes both pre-fusion and post-fusion features through separate pathways to generate language-guided and language-agnostic anomaly scores. These scores are subsequently fused via a learnable parameter to produce final predictions. The video-level multi-class classification head computes a similarity matrix between linearly projected representations of both modalities.

\subsection{Implementation}
\label{sec:supp-implementation}
All experiments could be conducted on a single NVIDIA RTX4090 GPU. We use PyTorch and PyTorch-Lightning as the code architecture. 
We set the temperature factors $\tau,\eta$ as 0.02 and employ a base hidden size of 512.
The temporal encoder and the fusion module both have 2 layers, and the detection head uses a single layer with a kernel size of 9. For dynamic video synthesis, we set $\theta=0.7,\alpha=0.5,\delta_m=5,n=200$.
We use AdamW as the optimizer with a batch size of 64 and a learning rate of 0.00005.
The model is trained for 40 epochs.
We do not use tricks like 5-crop, 10-crop, or score smooth, \etc.
We take a sample every 8 frames, except for the DoTA dataset, which provides 10fps extracted frames.

\begin{figure}[thbp]
    \centering
    \includegraphics[width=0.85\linewidth]{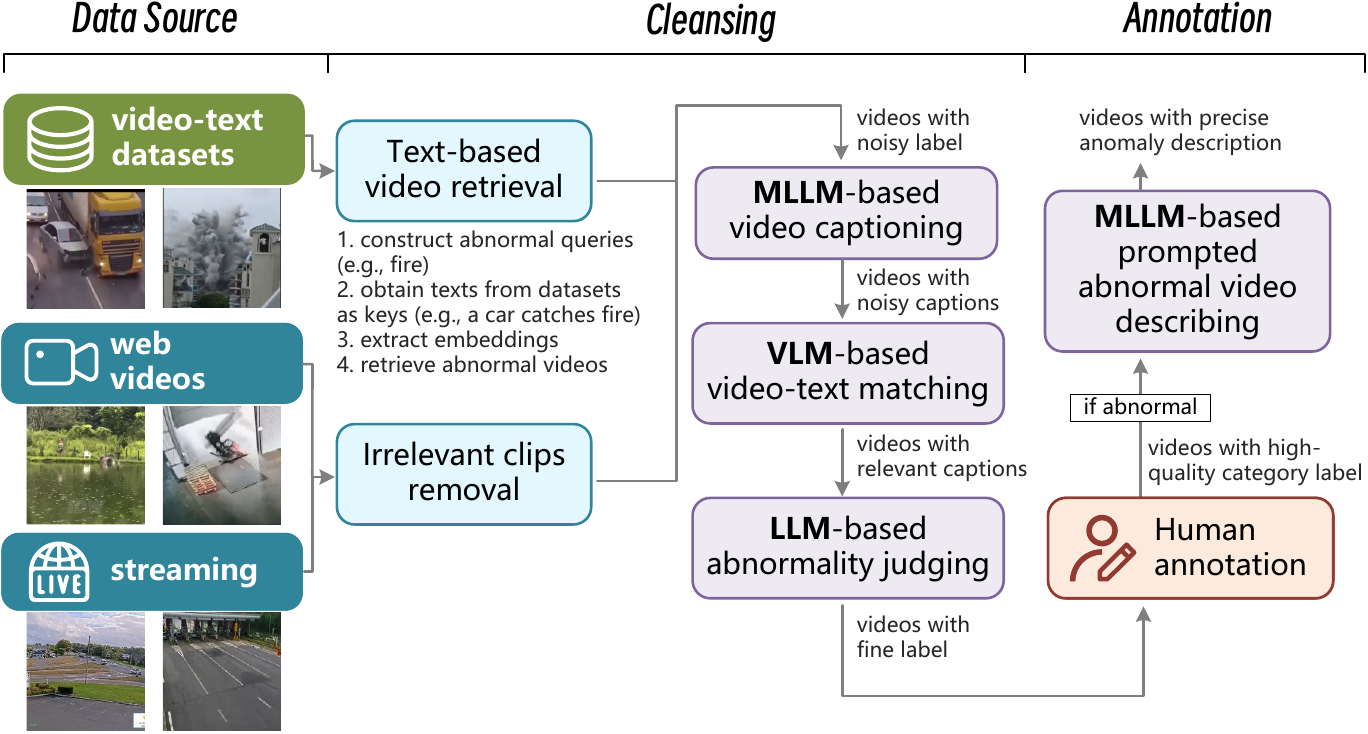}
    \caption{The data curation pipeline for our dataset. It includes three phases: data source, cleansing, and annotation.}
    \label{fig:supp-dataset-pipeline}
\end{figure}

\begin{table*}[t]
\caption{Details about the data sources used by PreVAD.}
\label{tab:supp-data-source-info}
\resizebox{\linewidth}{!}{%
\begin{tabular}{@{}lll@{}}
\toprule
\multicolumn{2}{l}{Source} &
  Notes \\ \midrule
\multirow{4}{*}{video-text datasets} &
  VIDAL-10M \citep{VIDAL10M} &
  \begin{tabular}[c]{@{}l@{}}The VIDAL-10M dataset comprises 3 million pairs of video-language data crawled from \\ YouTube Shorts and Freesound. It includes abnormal videos of accidents and normal videos \\ of everyday events taken by people with their smart phones.\end{tabular} \\ \cmidrule(l){2-3} 
 &
  MSR-VTT \citep{MSR-VTT} &
  \begin{tabular}[c]{@{}l@{}}The MSR-VTT dataset is a video captioning dataset that comprises 10K videos \\ with 20 captions per video. The videos are crawled from YouTube and are annotated \\ by crowdsourcing. It contains many domains, including human activities, tutorials, games, \\ etc. We mainly use them as normal data.\end{tabular} \\ \cmidrule(l){2-3} 
 &
  VATEX \citep{VATEX} &
  \begin{tabular}[c]{@{}l@{}}The VATEX dataset is also a video captioning dataset with 35K videos, \\ each having 10 manually labeled captions. Its videos are from the Kinetics dataset, \\ and thus are mostly diverse human activities. We mainly use them as normal data.\end{tabular} \\ \cmidrule(l){2-3} 
 &
  VALOR \citep{VALOR} &
  \begin{tabular}[c]{@{}l@{}}The VALOR dataset is a video captioning dataset that focuses on audio-based description. \\ The authors release a subset of 32K videos, each with one manually annotated caption. \\ The videos are from AudioSet, which covers human activities and natural scenes. \\ We mainly use them as normal data.\end{tabular} \\ \midrule
\multirow{3}{*}{web videos} &
  collections &
  \begin{tabular}[c]{@{}l@{}}There are dedicated channels on \href{https://www.youtube.com/}{YouTube} and \href{https://www.bilibili.com/}{Bilibili} that compile a variety of \\ accident videos for the purposes of entertainment or education (\eg, \href{https://www.youtube.com/@failarmy}{FailsArmy}, \href{https://www.youtube.com/@ActiveSelfProtection}{ASP}). \\ We clip segments from them as abnormal data.\end{tabular} \\ \cmidrule(l){2-3} 
 &
  long videos &
  \begin{tabular}[c]{@{}l@{}} Some channels post hour-long videos, the content of which includes \\ first-person-perspective driving, first-person-perspective traveling, and factory \\ production records, etc. We randomly clip segments from such videos as normal data.
  \\ (\eg, \href{https://www.bilibili.com/video/BV1qm4y1r7Yd}{Driving})\end{tabular} \\ \cmidrule(l){2-3} 
 &
  RWF-2000 \citep{RWF2000} &
  \begin{tabular}[c]{@{}l@{}}The RWF-2000 dataset is a violence recognition dataset that includes violent behaviors \\ from a surveillance perspective, and we incorporate it into our dataset.\end{tabular} \\ \midrule
\multirow{2}{*}{streaming} &
  YouTube streams &
  \begin{tabular}[c]{@{}l@{}}On YouTube, there are numerous 24-hour surveillance live streams aimed at promoting tourism. \\ We searched for such content using the keyword webcam and recorded segments \\ at different times as normal data from a surveillance perspective.
  (\eg, \href{https://www.youtube.com/watch?v=DmSOFIkM5gY}{Manchester UK Webcams})\end{tabular} \\ \cmidrule(l){2-3} 
 &
  traffic cameras &
  \begin{tabular}[c]{@{}l@{}}With the development of intelligent transportation, many countries have deployed \\ traffic cameras on highways and made their real-time footage publicly available \\ on the Internet to help the public better plan their travel. We also recorded these \\ as normal data. (\eg,\href{https://cwwp2.dot.ca.gov/vm/iframemap.htm}{California Department of Transportation}, \\ \href{https://epsn.jtw.sh.gov.cn/wxgzh/html/ssjt.html}{Shanghai Municipal Transportation Commission})\end{tabular} \\ \bottomrule
\end{tabular}%
}
\end{table*}

\section{Details of PreVAD}
\label{sec:supp-prevad-details}

\subsection{Dataset Statement}
\label{sec:supp-dataset-statement}
The dataset introduced in this paper is an original collection compiled by the authors for the purpose of advancing research in video anomaly detection. 
All data in the dataset were obtained from public sources, and there are no privacy or ethical issues involved.
The dataset presented in this paper is publicly released under the Creative Commons Attribution-NonCommercial 4.0 International (CC-BY-NC-4.0) license \footnote{\url{https://creativecommons.org/licenses/by-nc/4.0/}}.
Due to relevant laws, regulations, and copyright restrictions, we are unable to distribute the original raw videos directly.
Instead, for videos that are available on the Internet, we provide public links with a download script. For videos that are no longer available, processed versions are released in which all personally identifiable information (PII)—such as license plates and human faces—has been removed through blurring or other anonymization techniques.
The annotators participated voluntarily and were compensated at a rate substantially above the local legally mandated minimum wage.

\subsection{Pipeline} 
Fig.~\ref{fig:supp-dataset-pipeline} illustrates the proposed data curation pipeline, leveraging publicly accessible sources for scalability and reproducibility, integrating multimodal foundation models (LLMs, MLLMs, VLMs) for intelligent cleaning, and employing a hybrid human-AI annotation framework to ensure quality while reducing costs.

\paragraph{Source}
Tab.~\ref{tab:supp-data-source-info} lists the data source details for PreVAD. An important issue faced by video anomaly detection for many years is data scarcity, and we exploit multiple data sources. To the best of our knowledge, we are the first to enumerate and analyze the data sources in detail. We mainly collected the data in 2024.
All data sources are publicly available. Using these datasets, future works may construct a larger scale dataset.

\paragraph{Cleansing}
During the text-based video retrieval, we employ \texttt{all-MiniLM-L6-v2} and \texttt{CLIP-Large Text Encoder} to extract text embeddings of a set of abnormal queries (\eg, a fire broke out in the building) and captions in existing video-text datasets as keys. Then, we use FAISS \citep{douze2024faiss} to retrieve abnormal captions and obtain their corresponding videos. We preliminarily retrieved over 55K videos in this phase, among which only nearly 75\% of videos are available when downloading.
During the irrelevant clips removal, we employ PySceneDetect\footnote{github.com/Breakthrough/PySceneDetect} to split segments of the video and then automatically filter them with predefined rules of duration, RGB histogram, \etc.
During the MLLM-based video captioning phase, we prompt \texttt{Qwen2-VL-72B-Instruct} with:
\begin{quote}
    \textit{You are a helpful video describing assistant. You are good at English communication.
    Please describe the given video in detail. Just give the description without any other output.}
\end{quote}
to obtain captions of existing videos for later filtering.
During the VLM-based video-text matching, we utilize \texttt{InternVideo2} \citep{wang2024internvideo2} and \texttt{VALOR} \citep{VALOR} to calculate the matching scores of video-text pairs to remove irrelevant pairs with a threshold.
During the LLM-based abnormality judging, we prompt \texttt{Qwen/Qwen2-72B-Instruct} with:
\begin{quote}
    \textit{Below is the JSON format metadata of the title and content information of a video. I need you to determine if there are any anomalies in the video. Anomalies include: \newline
    - any type of accident \newline
    - ... \newline
    Note, you need to output two lines, the first line is a brief one-sentence analysis, and the second line is a judgment of `abnormal' or `normal'.
    If you cannot provide analysis due to ethical guidelines, please just output `abnormal'.
    }
\end{quote}
to filter videos with wrong abnormality labels.
After cleansing, we could obtain videos and corresponding video-level binary labels.

\paragraph{Annotation}
During the human annotation phase, we asked 7 annotators with expert knowledge for video anomaly detection to label each abnormal video with one video-level category among our taxonomy. They are also responsible for removing videos of low quality (\eg, anomalies far from the real world, excessive video effects or overlays of text, excessively short videos, and videos containing clips of multiple different scenes). For the validation set, we let two annotators independently annotate the segments of the selected abnormal videos and take the average as the final result. If the annotation difference between the two people is too large, a team leader will recheck.
During the MLLM-based prompted abnormal video describing phase, we prompt \texttt{Qwen2-VL-72B-Instruct} with:
\begin{quote}
    \textit{You are a helpful video describing assistant. You are good at English communication.
    Please describe the `\{VIDEO LABEL\}` event in the given video in detail in one sentence. 
    The description should focus on the incident.
    Just describe without any other output.}
\end{quote}
to obtain precise anomaly description. With this constrained prompt, the descriptions would be more related to the anomaly, therefore, are suitable for being anomaly definitions.

To quantify the quality of the annotations, we conduct a subjective quality assessment. We randomly sampled 200 videos, and asked three annotators to rate on a 1–5 scale independently, with the scoring criteria showning in Tab.~\ref{tab:sqa_criteria}.
On average, the descriptions generated without constrained prompt receive a score of 3.5, while those generated with constrained prompt receive a score of 4.3, demonstrating the effectiveness of the constrained prompt.
We also assess the description annotations of HIVAU-70K dataset \citep{Holmes-VAU} with the same criteria, which receive a score of 4.4, indicating the annotation noise is consistent with that of other works.
We also calculate the intraclass correlation coefficient (ICC) \citep{icc} to quantify inter-annotator agreement. The ICC(2,k) value was 0.799, indicating a high level of consistency across annotators.

\begin{table}[h]
\centering
\caption{Scoring criteria for evaluating the consistency between description and video.}
\label{tab:sqa_criteria}
\resizebox{\linewidth}{!}{%
\begin{tabular}{c p{14cm}}
\toprule
\textbf{Score} & \textbf{Criteria} \\
\midrule
5 & The text fully matches the video content, providing an accurate, complete, and natural description of the abnormal event. \\[4pt]
4 & The text is generally consistent with the video, with only minor deviations that do not affect overall understanding. \\[4pt]
3 & The text captures the main content but contains notable omissions or partial inaccuracies. \\[4pt]
2 & The text largely fails to match the video, capturing only limited relevant information or showing clear misinterpretations. \\[4pt]
1 & The text does not match the video content at all and fails to reflect the actual video. \\
\bottomrule
\end{tabular}
}
\end{table}

\begin{figure*}[t]
    \centering
    \begin{subfigure}[b]{0.47\linewidth}
        \centering
        \includegraphics[width=1\linewidth]{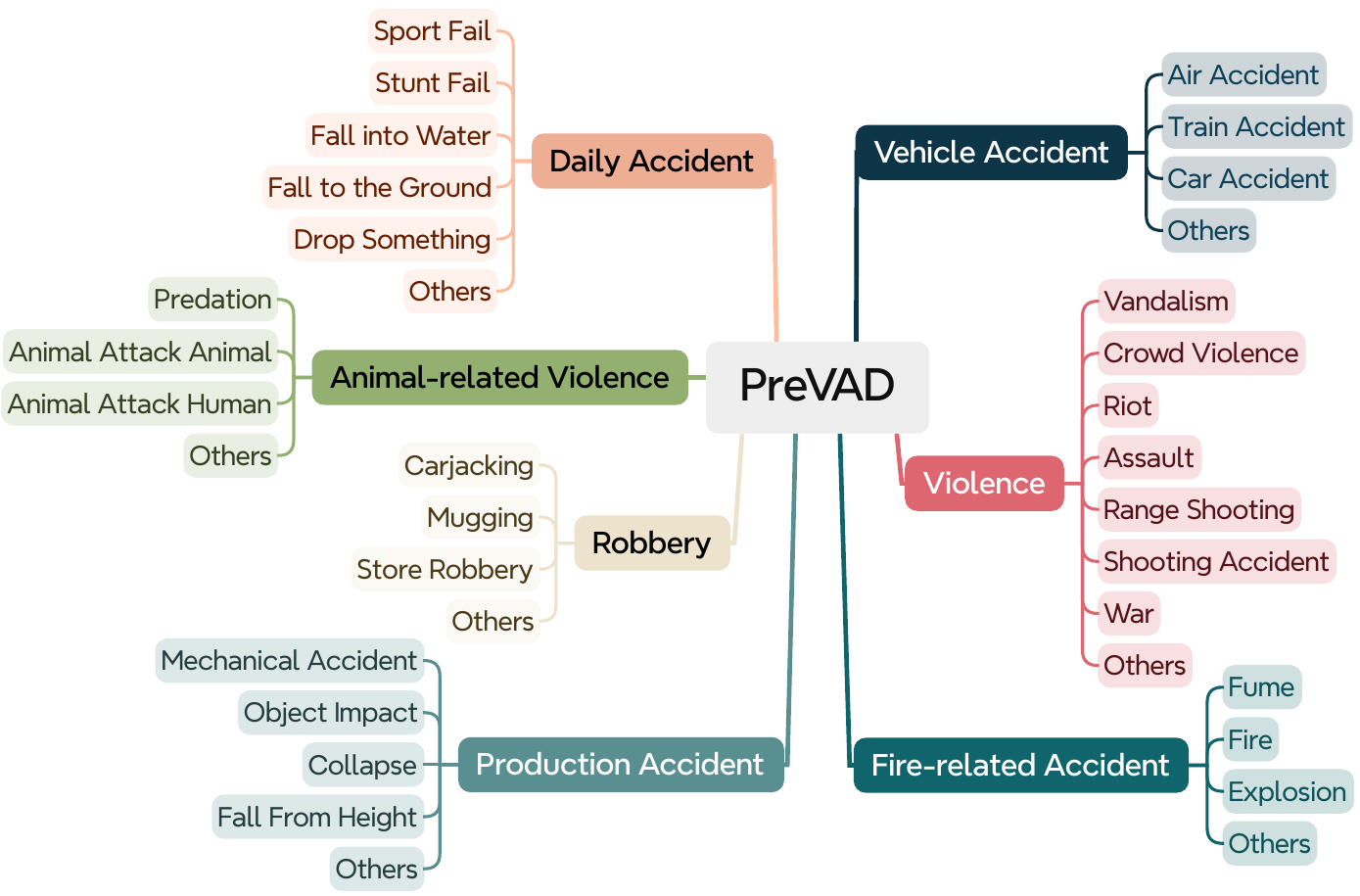}
        \caption{The multi-level taxonomy of PreVAD.}
        \label{fig:supp-dataset-taxonomy}
    \end{subfigure}
    \hfill
    \begin{subfigure}[b]{0.47\linewidth}
        \centering
        \includegraphics[width=\linewidth]{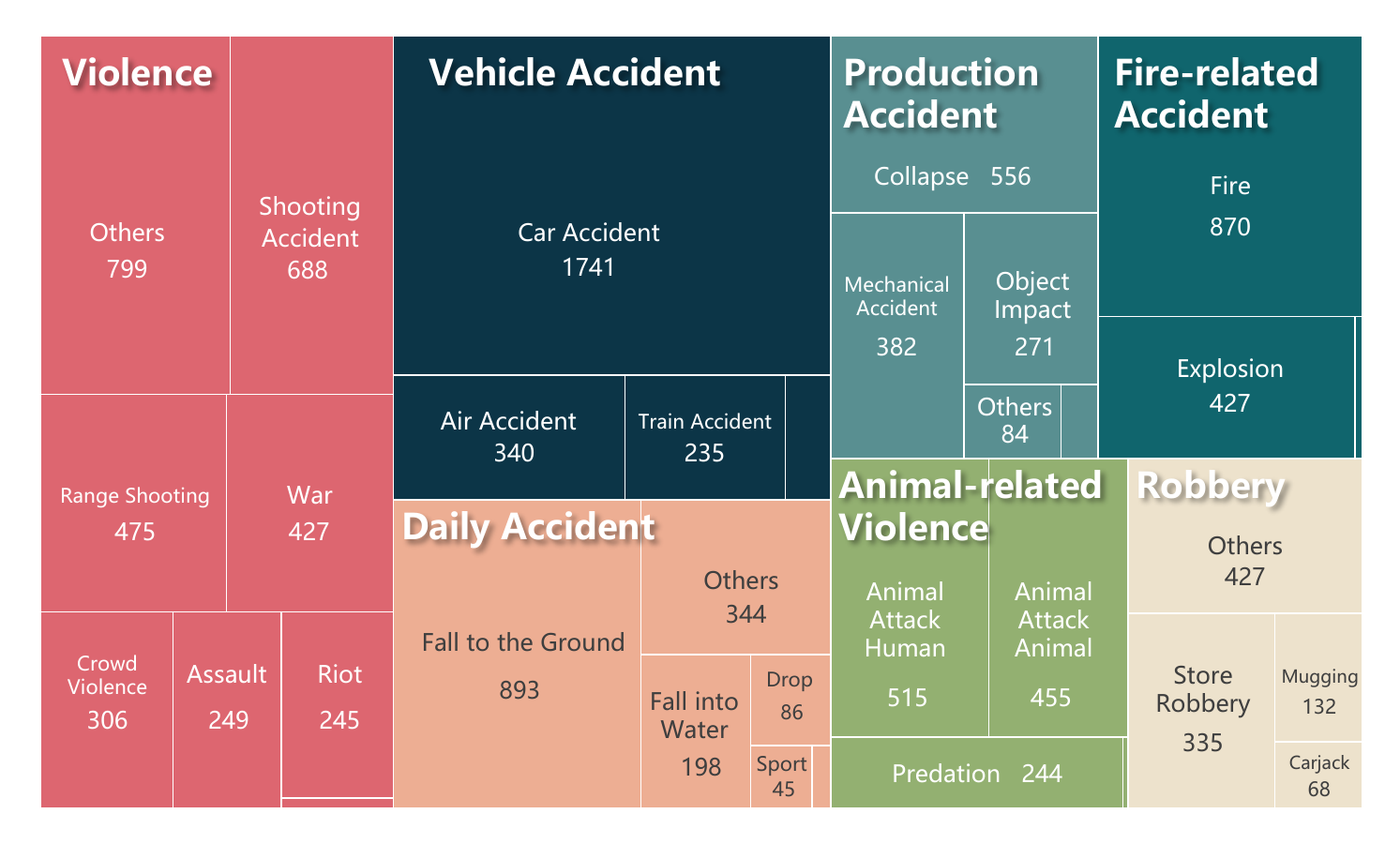}
        \caption{The treemap of PreVAD's category distribution.}
        \label{fig:supp-dataset-distribution}
    \end{subfigure}

    \begin{subfigure}[b]{0.46\linewidth}
        \centering
        \includegraphics[width=\linewidth]{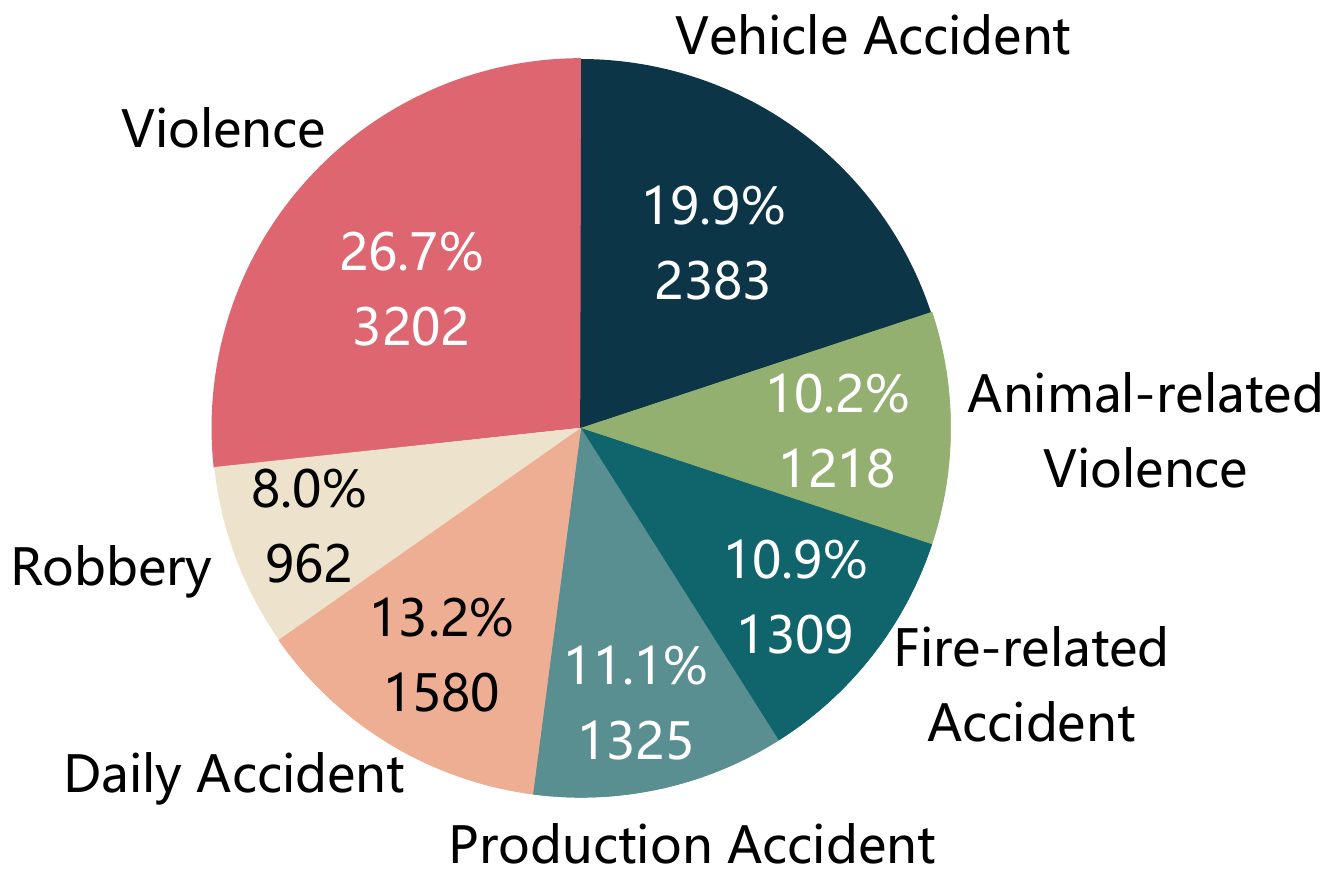}
        \vspace{0.03cm}
        \caption{Distribution of coarse-grained categories.}
        \label{fig:supp-dataset-L1-distribution}
    \end{subfigure}
    \hfill
    \begin{subfigure}[b]{0.46\linewidth}
        \centering
        \includegraphics[width=\linewidth]{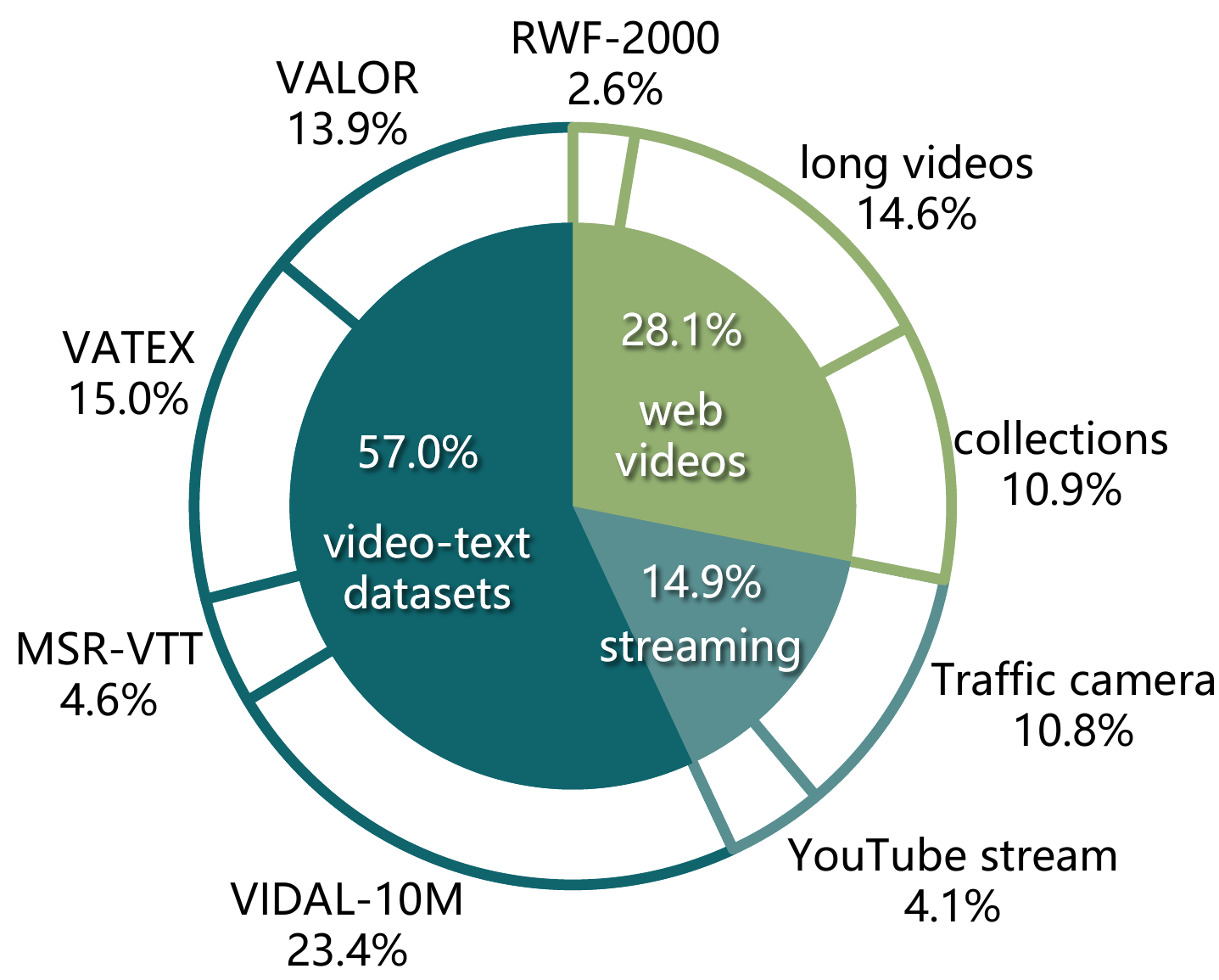}
        \caption{Detailed distribution of video sources.}
        \label{fig:supp-dataset-detail-source-distribution}
    \end{subfigure}
    \hfill

    \hfill
    \begin{subfigure}[t]{0.36\linewidth}
        \centering
        \includegraphics[width=\linewidth]{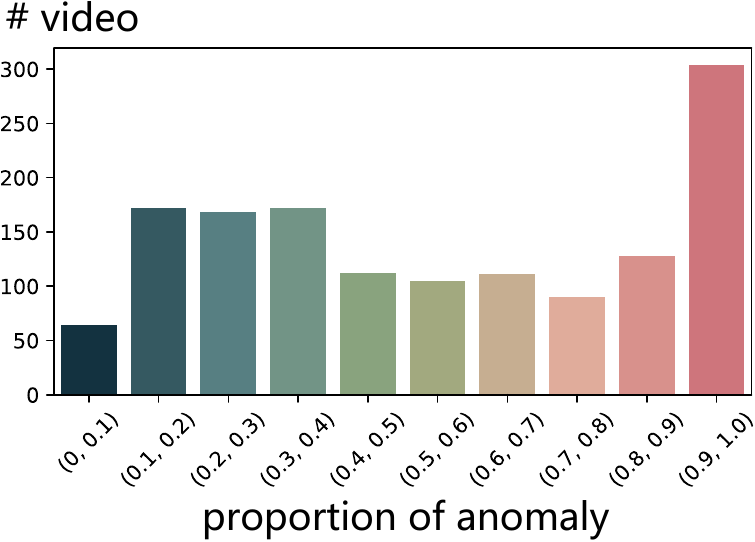}
        \caption{Distribution of the proportion of anomaly in the validation set.}
        \label{fig:supp-dataset-anomaly-proportion}
    \end{subfigure}
    \hfill
    \begin{subfigure}[t]{0.56\linewidth}
        \centering
        \includegraphics[width=\linewidth]{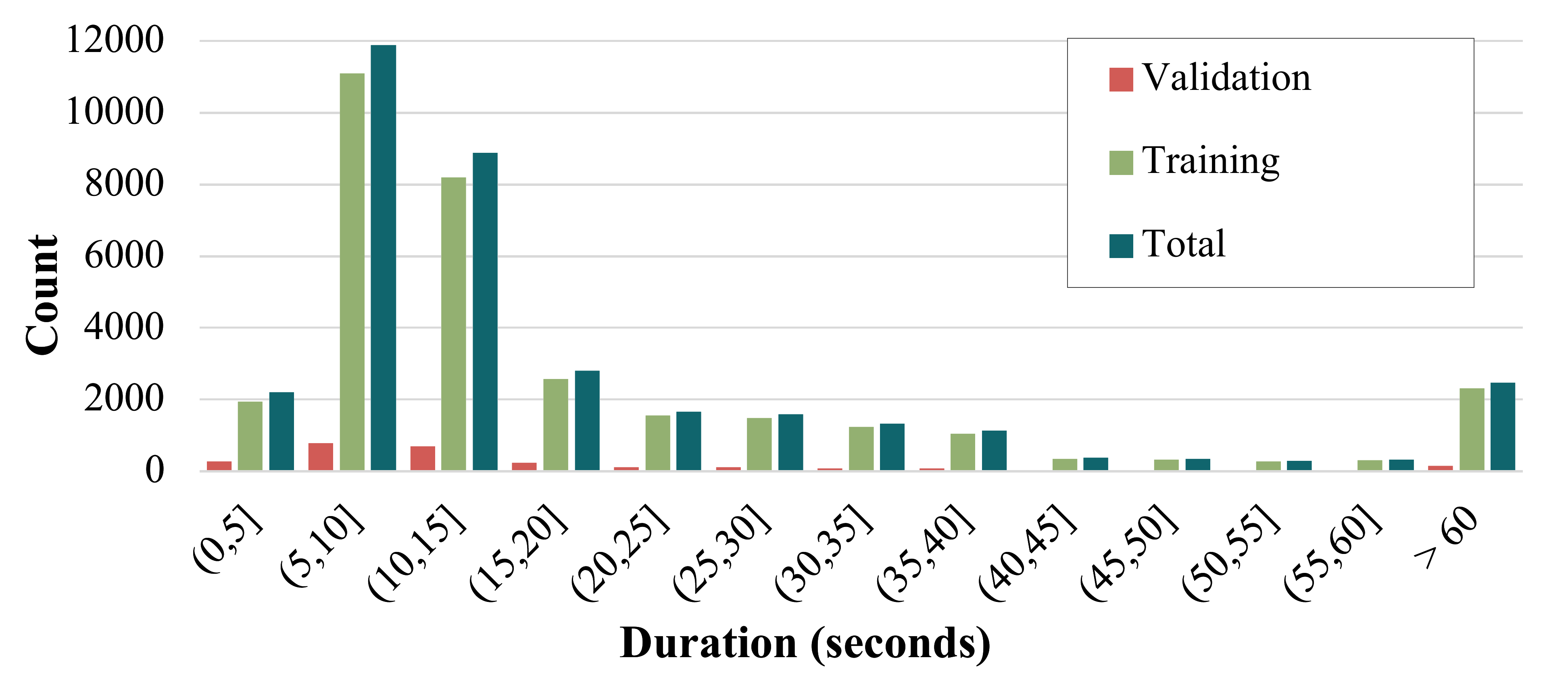}
        \caption{Distribution of the duration.}
        \label{fig:supp-dataset-duration}
    \end{subfigure}
    \hfill
    
    \caption{More statistics of PreVAD.}
\end{figure*}

\subsection{More Statistics}
Fig.~\ref{fig:supp-dataset-taxonomy} provides the taxonomy of PreVAD, with Fig.~\ref{fig:supp-dataset-distribution} and Fig.~\ref{fig:supp-dataset-L1-distribution} show the distribution of categories.
Fig.~\ref{fig:supp-dataset-detail-source-distribution} provides a detailed source distribution visualization.
Fig.~\ref{fig:supp-dataset-anomaly-proportion} shows the proportion of anomaly in annotated abnormal videos and Fig.~\ref{fig:supp-dataset-duration} presents the distribution of video duration in seconds.

\subsection{Examples}
\label{sec:supp-prevad-samples}
We show examples of videos for each category and annotated examples in Figs.~\ref{fig:supp-prevad-samples},\ref{fig:supp-prevad-annot-samples}. 
Our data contains a variety of high-quality anomaly videos, and the human-AI annotation pipeline can obtain accurate anomaly descriptions at a low cost.

\section{Details of Evaluation}
\label{sec:supp-evaluation}

\subsection{Test Sets}
\label{sec:supp-test-sets}
Tab.~\ref{tab:supp-test-data-info} summarizes the number of test videos, anomaly categories, and other relevant details for the seven datasets used as test sets. 
These datasets cover a diverse range of anomalies across domains such as crime (UCF-Crime, XD-Violence, LAD, MSAD), pedestrian (UBNormal), and traffic (DoTA, LAD), where each test set represents a distinct scenario with its own anomaly definition.
There exists many categories that are not included in PreVAD, where some are unseen categories (e.g., \textit{Arrest}), while others appear in PreVAD’s normal videos (e.g., \textit{Pedestrian on Road}). 
This poses a challenge to the open-world generalization ability of the model.
Notably, the DoTA dataset only contains abnormal videos, therefore, the predictions on this dataset are normalized for evaluation.

\begin{table}[t]
\caption{Overview of test datasets. Abnormal categories not included in PreVAD are \underline{underlined}: some are unseen categories (e.g., Arrest), while others appear in PreVAD’s normal videos (e.g., Pedestrian on Road).}
\label{tab:supp-test-data-info}
\resizebox{\linewidth}{!}{%
\begin{tabular}{@{}lcll@{}}
\toprule
Datasets &
  \# testing videos &
  Categories &
  Notes \\ \midrule
UCF-Crime &
  290 &
  \begin{tabular}[c]{@{}l@{}}\underline{Abuse}, \underline{Arrest}, \underline{Arson}, Assault, Accident, \underline{Burglary}, \\ Explosion, Fighting, Robbery, Shooting, \underline{Stealing}, \underline{Shoplifting}, Vandalism\end{tabular} &
  crime scene from surveillance camera \\ \midrule
XD-Violence &
  800 &
  Abuse, Car accident, Explosion, Fighting, Riot, Shooting &
  \begin{tabular}[c]{@{}l@{}}videos contain shot cut, and part of \\ the data comes from movies\end{tabular} \\ \midrule
MSAD &
  240 &
  \begin{tabular}[c]{@{}l@{}}Assault, Fighting, People Falling, Robbery, Shooting, Traffic Accident, \\ Vandalism, Explosion, Fire, Object Falling, \underline{Water Incident}\end{tabular} &
  the accident in the perspective \\ \midrule
UBNormal &
  211 &
  \begin{tabular}[c]{@{}l@{}}\underline{running}, \underline{having a seizure}, \underline{laying down}, \underline{shuffling}, \underline{walking drunk}, \\ people and car accident, car crash, \underline{jumping}, fire, smoke, \\ \underline{jaywalking}, \underline{driving outside lane}\end{tabular} &
  videos of 3D animation \\ \midrule
DoTA &
  1402 &
  (9 kinds of fine-grained traffic anomalies) &
  \begin{tabular}[c]{@{}l@{}}only contains abnormal videos in \\ both first-person and third-person view\end{tabular} \\ \midrule
TAD &
  100 &
  \begin{tabular}[c]{@{}l@{}}Accidents, \underline{Illegal Turns}, \underline{Illegal Occupations}, \\ \underline{Retrograde Motion}, \underline{Pedestrian on Road}, \underline{Road Spills}, \underline{Else}\end{tabular} &
  \begin{tabular}[c]{@{}l@{}}abnormal videos of traffic in both \\ first-person and third-person view\end{tabular} \\ \midrule
LAD &
  560 &
  \begin{tabular}[c]{@{}l@{}}Crash, \underline{Crowd}, \underline{Destroy}, \underline{Drop}, Falling, Fighting, Fire, \\ Fall Into Water, Hurt, \underline{Loitering}, \underline{Panic}, \underline{Thiefing}, \underline{Trampled}, Violence\end{tabular} &
  \begin{tabular}[c]{@{}l@{}}accidents from surveillance and \\ first-person perspectives\end{tabular} \\ \bottomrule
\end{tabular}%
}
\end{table}

\subsection{Protocols}
\label{sec:supp-protocols}
We set up two protocols to evaluate the model, where Protocol 1 \textbf{comprehensively} evaluates the open-world generalization capability of models, and Protocol 2 \textbf{specifically} evaluates robustness to concept drift.

\paragraph{Protocol 1}
Models are trained on pretraining datasets (\eg, PreVAD) and evaluated in a zero-shot manner across multiple test sets, each representing a distinct scenario with unique definitions of anomalies, video distributions, and anomaly categories.
For instance, UCF-Crime focuses on crime-related scenarios and contains normal videos showing pedestrians crossing roads—a behavior considered abnormal in TAD, which focuses on traffic scenarios.
Additionally, while UCF-Crime and MSAD primarily consist of surveillance footage, XD-Violence and DoTA include movie clips and dashcam videos, respectively.
According to Tab.~\ref{tab:supp-test-data-info}, this protocol simulates the actual open-world situation, i.e., training on limited data and testing in different scenarios.

\paragraph{Protocol 2}
Models are trained on pretraining datasets (e.g., PreVAD) and tested in a zero-shot setting under varying anomaly definitions. Each definition corresponds to a randomly selected subset of anomaly categories from the test set. Videos belonging to the chosen subset are treated as abnormal, while those outside the subset are considered normal.
To ensure reliability, we randomly sample five subsets per test set and report the average performance (denoted as drift@5). The selected subsets are detailed in Tab.~\ref{tab:protocol-2}.
For example, when subset \#1 is chosen, videos labeled as fighting, shooting, and riot are regarded as abnormal, whereas videos labeled as normal, abuse, car accident, and explosion are considered normal.
This protocol is specifically designed to assess the ability to handle concept drift issue, which is the main focus of this paper.

\begin{table}[h]
\caption{Drift@5 settings for evaluation protocol 2.}
\label{tab:protocol-2}
\centering
\resizebox{\textwidth}{!}{%
\begin{tabular}{@{}cll@{}}
\toprule
Subset & XD-Violence                    & MSAD                                                                              \\ \midrule
1      & Fighting, Shooting, Riot       & Assault, Fighting, Robbery, Shooting, Vandalism, Explosion                        \\
2      & Abuse, Car accident, Explosion & People\_falling, Traffic\_accident, Fire, Object\_falling, Water\_incident, Explosion \\
3      & Fighting, Abuse, Explosion     & Assault, People\_falling, Traffic\_accident, Vandalism, Fire, Object\_falling        \\
4      & Shooting, Car accident, Riot   & Fighting, Robbery, Shooting, Traffic\_accident, Water\_incident, Fire               \\
5      & Riot, Abuse, Car accident      & People\_falling, Robbery, Object\_falling, Water\_incident, Explosion, Assault       \\ \bottomrule
\end{tabular}%
}
\end{table}

\begin{table}[t]
    \caption{Ablations of hyperparameters in dynamic video synthesis. Following symbols in Algorithm \ref{alg:dvs}, $\delta_m$ denotes the max number of segments, $\theta$ denotes the synthesis probability, and KNN refers to retrieval of K nearest neighbors.}
    \label{tab:supp-abl-dvs}
    \centering
\resizebox{0.5\linewidth}{!}{%
\begin{tabular}{@{}cccccc@{}}
\toprule
$\delta_m$ & $\theta$ & KNN & PreVAD & UCF-Crime & XD-Violence \\ \midrule
5         & 0.7      & \checkmark    & \textbf{69.98}       & \textbf{81.12}           & \textbf{74.25}            \\
1         & -        & \checkmark    & 65.73       & 79.18           & 71.41            \\
3         & 0.7      & \checkmark    & 68.24       & 80.34           & 67.96            \\
7         & 0.7      & \checkmark    & 66.99       & 80.60           & 78.09            \\
9         & 0.7      & \checkmark    & 67.13       & 79.28           & 72.90            \\
5         & 0.7      &               & 67.85       & 79.98           & 68.95            \\
5         & 1        & \checkmark    & 67.86       & 78.81           & 71.63            \\
5         & 0.3      & \checkmark    & 66.53       & 80.50           & 68.74            \\ \bottomrule
\end{tabular}%
}
\end{table}

\section{Details of Reproduced Methods}
\label{sec:supp-reproduced-methods}
Since this is a pioneering project, most of our comparisons are based on the results we reproduced with open-source codes and weights.
The compared methods can be divided into two groups: non-LLM-based and LLM-based approaches.

\paragraph{Non-LLM-based methods}
We compare PEL \citep{PEL} and VadCLIP \citep{vadclip}, which achieve state-of-the-art performance in video anomaly detection, along with ActionCLIP \citep{ActionCLIP} and ViFi-CLIP \citep{ViFi-CLIP}, SOTA methods in open-vocabulary action recognition. Additionally, a simple CLIP baseline is also included. All methods utilize the same \texttt{ViT-B/16} features, except for PEL, which uses I3D features. For VadCLIP, we employ its \textit{A-branch} and apply the same post-processing method as ours to obtain classification results. For the CLIP baseline, we uniformly sampled 8 frames from normal videos and 8 frames from abnormal segments of abnormal videos. Similarity are computed between the visual features and textual prompts constructed from category names. The probabilities from the 8 frames are averaged to produce the final prediction. ActionCLIP and ViFi-CLIP are handled similarly to the CLIP baseline, except that they directly process 8-frame video clips and output final results. Although this comparison setup inherently favors these models (as they do not require temporal localization), experimental results demonstrate that our method still achieves superior performance.

\paragraph{LLM-based methods}
We compare general-purpose grounding-capable models including Qwen2-VL \citep{qwen2-vl} and Qwen2.5-VL \citep{qwen2_5-vl}, the fine-tuned MLLM-based video anomaly detection method HolmesVAU \citep{Holmes-VAU}, and the multi-VLM ensemble method LAVAD \citep{LAVAD}. These methods are capable of processing textual prompts, which we use to inject anomaly definition information. Since Qwen2-VL and Qwen2.5-VL cannot output frame-level scores and only predict boundaries, we set the anomaly score of each frame to 0 when no segment is predicted and to 1 for frames within predicted boundaries. HolmesVAU could not be included in the zero-shot comparison on XD-Violence since its training set contains videos from XD-Violence. Although a direct comparison of computational efficiency is challenging due to differences in hardware and implementation optimizations, our proposed method are significantly faster (minutes vs. hours during evaluation).

\begin{table}[t]
    \caption{Ablation on prompting method during inference.}
    \label{tab:supp-abl-prompt}
    \centering
    \begin{tabular}{@{}lc@{}}
    \toprule
    Prompt method    & UCF-Crime AUC \\ \midrule
    class name       & 80.44         \\
    manual prompt (default) & 81.12         \\
    video-specific description & 83.03         \\ \bottomrule
    \end{tabular}
\end{table}

\begin{figure}[t]
    \centering
    \includegraphics[width=0.55\linewidth]{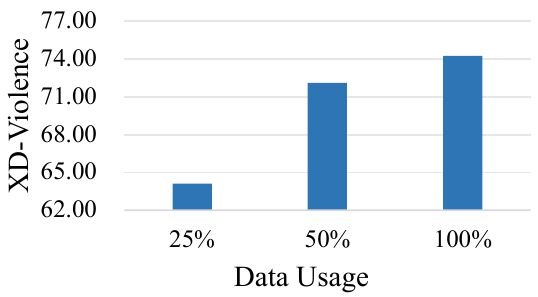}
    \caption{Ablations of data usage during training.}
    \label{fig:supp-data-usage}
\end{figure}

\begin{table}[thp]
    \caption{Ablation on each component (full version). \textit{guided} refers to guiding the detection with language. \textit{Det. Avg.} refers to the average zero-shot detection performance on seven datasets. \textit{Cls. Avg.} refers to the average zero-shot classification performance on UCF-Crime and XD-Violence.}
    \label{tab:supp-abl-module-full}
\resizebox{\linewidth}{!}{%
\begin{tabular}{@{}ccc|c|cc|ccccccc|cccc@{}}
\toprule
\multirow{2}{*}{$\mathcal{L}_{\text{dvs}}$} &
  \multirow{2}{*}{$\mathcal{L}_{\text{neg}}$} &
  \multirow{2}{*}{guided} &
  \multirow{2}{*}{PreVAD} &
  \multicolumn{2}{c|}{average metrics} &
  \multicolumn{7}{c|}{Detection} &
  \multicolumn{4}{c}{Classification} \\
           &            &            &       & Det. Avg. & Cls. Avg. & UCF   & XD    & MSAD  & UBN   & DoTA  & TAD   & LAD   & UCF ACC & UCF F1 & XD ACC & XD F1 \\ \midrule
\checkmark & \checkmark & \checkmark & 69.98 & 76.42     & 52.57     & 81.12 & 74.25 & 90.41 & 58.07 & 62.60 & 89.56 & 78.91 & 51.72   & 16.64  & 78.13  & 63.80 \\
           & \checkmark & \checkmark & 65.73 & 73.51     & 51.73     & 79.18 & 70.24 & 84.14 & 55.79 & 60.02 & 85.95 & 79.23 & 44.83   & 16.25  & 79.00  & 66.82 \\
\checkmark &            & \checkmark & 68.92 & 73.96     & 51.85     & 77.66 & 70.66 & 90.02 & 57.56 & 59.88 & 83.26 & 78.65 & 48.97   & 17.76  & 75.88  & 64.78 \\
           &            & \checkmark & 67.35 & 71.31     & 48.81     & 77.77 & 62.49 & 89.50 & 55.80 & 57.50 & 80.92 & 75.19 & 43.82   & 17.60  & 73.75  & 60.06 \\
\checkmark & \checkmark &            & 69.87 & 73.84     & 46.23     & 76.64 & 71.39 & 88.60 & 56.88 & 61.40 & 83.96 & 78.02 & 46.21   & 13.28  & 66.87  & 58.54 \\ \bottomrule
\end{tabular}%
}
\end{table}

\section{More Experimental Results}
\label{sec:supp-more-results}

\subsection{Ablations}
In Tab.~\ref{tab:supp-abl-prompt}, we experiment with different prompting methods: 1) \textit{class name} uses only category labels as anomaly definitions (\eg, explosion, fighting), 2) \textit{manual prompt} employs human-designed description of the category (\eg, \textit{Explosion, often resulting in fire, smoke, and scattered debris}), which serves as our default approach, and 3) \textit{video-specific description} utilizes the description of specific abnormal events as definitions (\eg, \textit{two people catch fire in the explosion near the garage}). Experimental results demonstrated that better prompts could enhance performance. Although video-specific descriptions are unavailable in standard test datasets, it reveals our method's potential for practical applications like locating relevant surveillance clips when specific event details are known.

We provide the complete version of Tab.~\ref{tab:exp-abl-modules} in Tab.~\ref{tab:supp-abl-module-full}, which contains more metrics of ablations. We also report hyperparameter search results of dynamic video synthesis in Tab.~\ref{tab:supp-abl-dvs}. And Tab.~\ref{fig:supp-data-usage} illustrates the importance of large-scale datasets.

\subsection{Qualitative Results}
As in Fig.~\ref{fig:supp-vis-results}, we present more zero-shot comparisons on different datasets. The compared three methods all performed visual-language alignment. VadCLIP and LaGoVAD, which utilize pre-aligned feature extractors, demonstrate a significant advantage in cross-dataset testing, further proving the importance of semantic alignment for generalization. Additionally, our proposed LaGoVAD achieves predictions with fewer false alarms, thanks to the contrastive learning with in-sample negative mining we conducted. Finally, LaGoVAD was also able to make accurate predictions for the novel \textit{Arrest} category.

Fig.~\ref{fig:supp-vis-mul-results} shows a prediction example of LaGoVAD on the XD-Violence dataset. The primary anomaly in this video is a riot scene, accompanied by secondary anomalies including fighting, explosion, and shooting. We visualize the outputs from both the detection head and the classification head. 
Our model achieves accurate detection in normal intervals (a-c) as well as in riot scenarios (b). Furthermore, the model effectively captures mixed fighting (e) and shooting events (f-g) through its classification head, demonstrating a sensitive response to co-occurring anomalies. Notably, even for transient explosion events (d) with extremely short durations, our model could produce sharp response spikes.

We provide some failure cases of the proposed method in Fig.~\ref{fig:supp-failure}. When the definition is internally contradictory, LaGoVAD produces unstable and unreliable scores because it has no mechanism of handling contradiction (e.g., request clarification from the user). As for ambiguous definition, LaGoVAD shows a certain level of robustness, which we attribute to the semantic understanding capability of the CLIP text encoder. 

\begin{figure*}[t]
    \centering
    \includegraphics[width=0.9\linewidth]{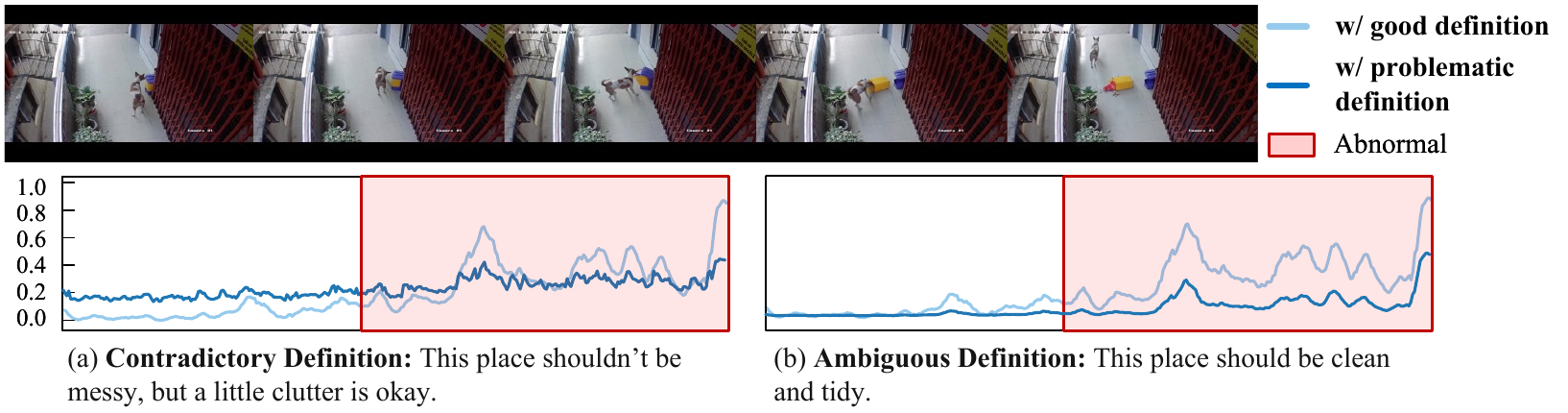}
    \caption{Visualization of failure cases under contradictory and ambiguous definitions. The \textit{good} definition is consistent with the prompt shown in Figure~\ref{fig:exp-vis-main}(b), i.e., \textit{Knocking over a trashcan is abnormal}.}
    \label{fig:supp-failure}
\end{figure*}

\section{Discussion of the Assumption}
\label{sec:diss-assumption}

This section discusses some special cases in Assumption~\ref{assume:1}.

One scenario arises when additional hidden factors influence abnormality beyond the explicit definition. For example, consider an access-control rule where entering a corridor is abnormal only during examination periods. If the examination schedule is not part of the definition, it would affects whether a behavior is abnormal. However, such factors can be incorporated into an augmented anomaly definition $Z$, and the assumption remains valid once the definition is expanded to include these external conditions.

Another scenario occurs when the anomaly definition is entirely dependent on the video. However, many anomaly definitions in practice are not inferable from the visual evidence alone. For instance, if it is impossible to determine from the scene whether smoking is permitted, the abnormality of smoking depends on an external policy rather than the video itself. Similarly, carrying a backpack may be abnormal only in restricted zones, but if the boundary of the restricted zone is not visually indicated, the definition cannot be recovered from the video.
These cases further demonstrate the validity of our assumption.

\section{Proof of Propositions}
\label{sec:proof}

\begin{proof}[Proof of Proposition~\ref{prop:1}]
Under Assumption~\ref{assume:1}, there exists a deterministic function
$\mathcal{F}$ such that whenever $(V=v, Z=z, Y=y)$ occurs with positive
probability, there exists $y = \mathcal{F}(v,z)$.

We now compute the conditional distribution $P_d(Y=y \mid V=v, Z=z)$ under
any domain $d$. By the definition of conditional probability,
\begin{equation}
    P_d(Y=y \mid V=v, Z=z)
    =
    \frac{P_d(V=v, Z=z, Y=y)}{P_d(V=v, Z=z)}.
\end{equation}

Because of Assumption~\ref{assume:1}, for any fixed $(v,z)$, there is exactly
one value $y^\ast = \mathcal{F}(v,z)$ such that
\begin{equation}
    P_d(V=v, Z=z, Y=y^\ast) = P_d(V=v, Z=z),
\end{equation}
and for all $y \neq y^\ast$,
\begin{equation}
    P_d(V=v, Z=z, Y=y) = 0.
\end{equation}

Substituting these two cases into the conditional probability formula:
\begin{equation}
    P_d(Y=y \mid V=v, Z=z)
    =
    \begin{cases}
        1, & y = \mathcal{F}(v,z),\\[4pt]
        0, & y \neq \mathcal{F}(v,z).
    \end{cases}
\end{equation}

This distribution depends only on the deterministic mapping
$y=\mathcal{F}(v,z)$ and does not depend on the choice of domain $d$.
Therefore, for any two domains $d_1$ and $d_2$,
\begin{equation}
    P_{d_1}(Y \mid V,Z) = P_{d_2}(Y \mid V,Z).
\end{equation}
\end{proof}

\begin{proof}[Proof of Proposition~\ref{prop:2}]
    By the law of total probability, for any domain $d$, video $v$ and label $y$, we have
    \begin{equation}
        P_d(Y = y \mid V = v) = \sum_{z} P_d(Y = y \mid Z = z, V = v)\, P_d(Z = z \mid V = v).
    \end{equation}
    Under Assumption~\ref{assume:1}, the label $Y$ is deterministically given by $\mathcal{F}(V,Z)$. Hence, for all $y \neq \mathcal{F}(v,z)$, we have $P_d(V=v, Z=z, Y=y) = 0$, which implies
    \begin{equation}
        P_d(Y = y \mid V = v)
        = \sum_{z : \mathcal{F}(v,z) = y} P_d(Z = z \mid V = v).
        \label{eq:prof-prop2}
    \end{equation}
    Now assume that there exists at least one pairs of $v^\star, y^\star$ such that
    \begin{equation}
        \sum_{z : \mathcal{F}(v^\star,z)=y^\star} P_{d_1}(Z=z \mid V=v^\star)
        \neq
        \sum_{z : \mathcal{F}(v^\star,z)=y^\star} P_{d_2}(Z=z \mid V=v^\star),
    \end{equation}
    applying this identity to Eq.~\ref{eq:prof-prop2} yields
    \begin{equation}
    \sum_{z : \mathcal{F}(v^\star,z)=y^\star}
        P_{d_1}(Z=z \mid V=v^\star)
    \;\neq\;
    \sum_{z : \mathcal{F}(v^\star,z)=y^\star}
        P_{d_2}(Z=z \mid V=v^\star).
    \end{equation}
    
    Therefore,
    \begin{equation}
        P_{d_1}(Y | V) \neq P_{d_2}(Y | V).
    \end{equation}
\end{proof}

\begin{figure}[htbp]
    \centering
    \includegraphics[width=1\linewidth]{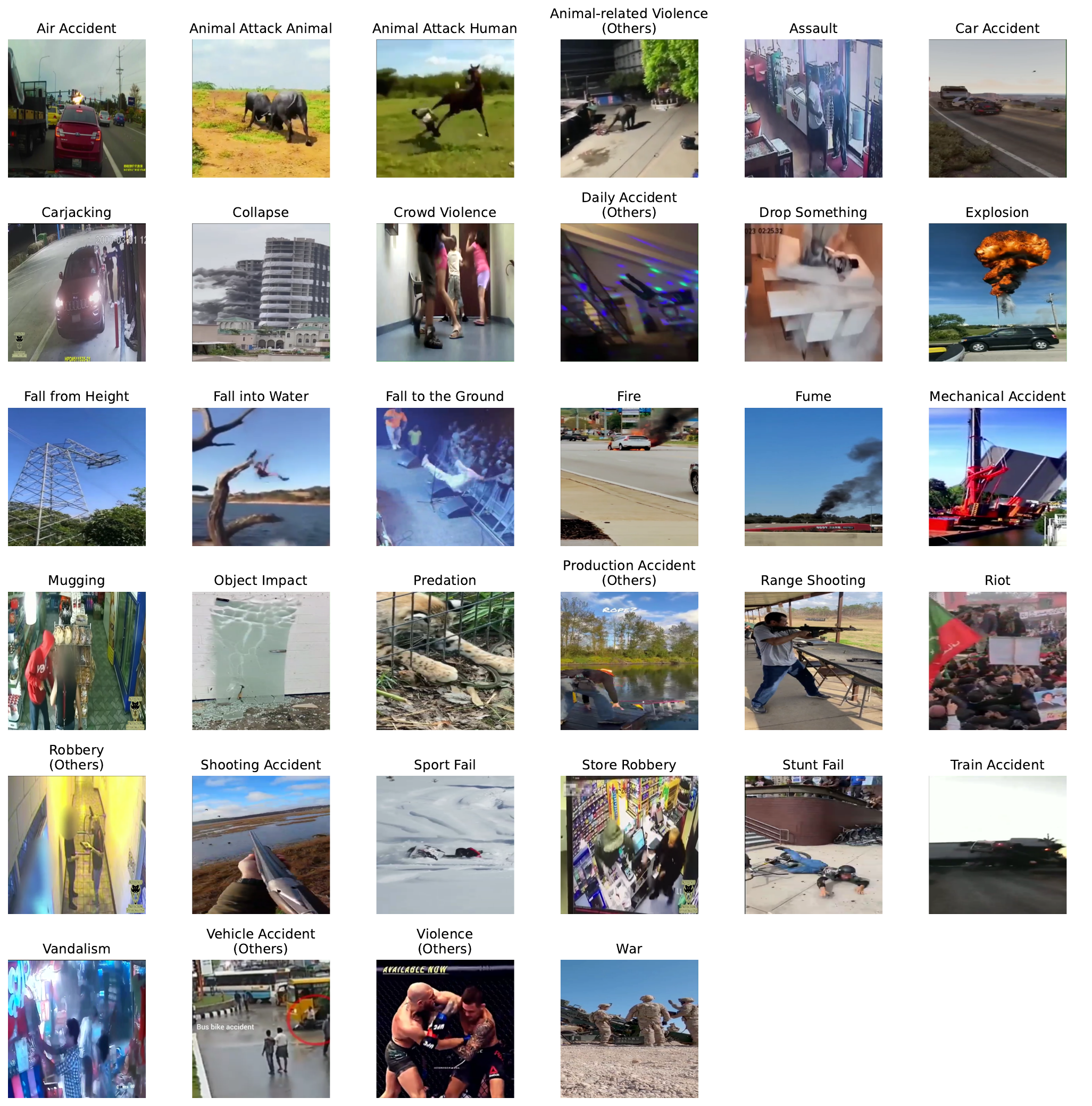}
    \caption{Examples of each abnormal category in PreVAD.}
    \label{fig:supp-prevad-samples}
\end{figure}
\begin{figure}[t]
    \centering
    \includegraphics[width=1\linewidth]{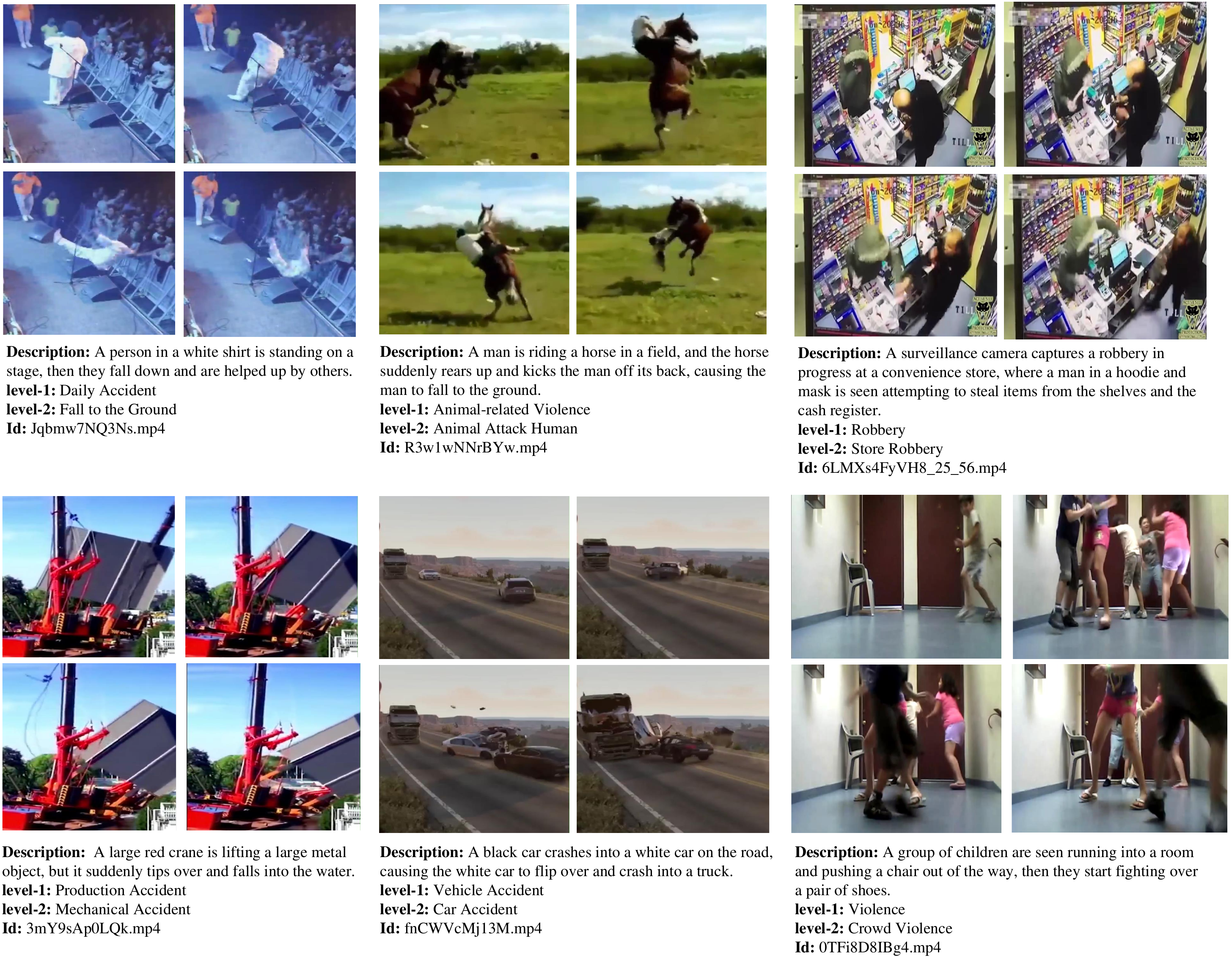}
    \caption{Examples of annotations in PreVAD.}
    \label{fig:supp-prevad-annot-samples}
\end{figure}

\begin{figure}[h]
    \centering
    \begin{subfigure}{0.48\textwidth}
        \includegraphics[width=\linewidth]{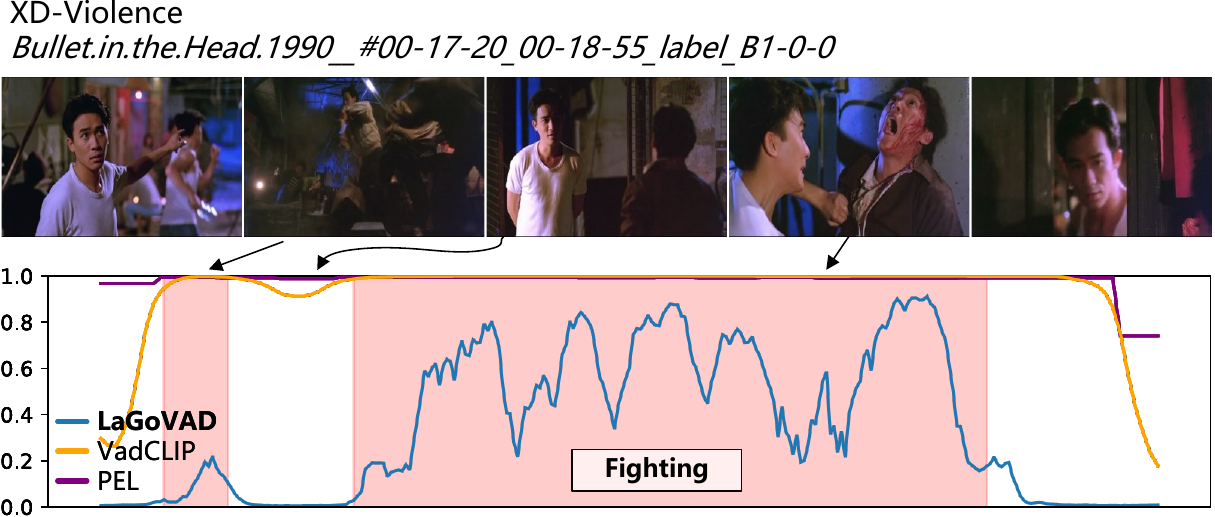}
        \caption{}
        \label{fig:supp-vis-results-sub1}
    \end{subfigure}
    \hfill
    \begin{subfigure}{0.48\textwidth}
        \includegraphics[width=\linewidth]{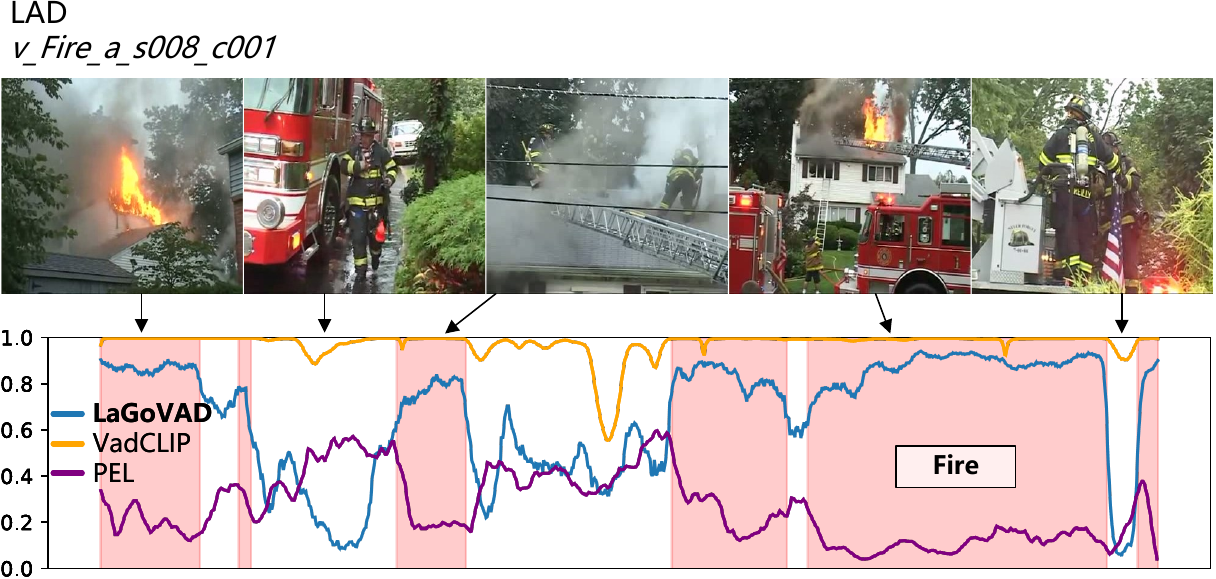}
        \caption{}
        \label{fig:supp-vis-results-sub2}
    \end{subfigure}
    
    \qquad
    
    \begin{subfigure}{0.48\textwidth}
        \includegraphics[width=\linewidth]{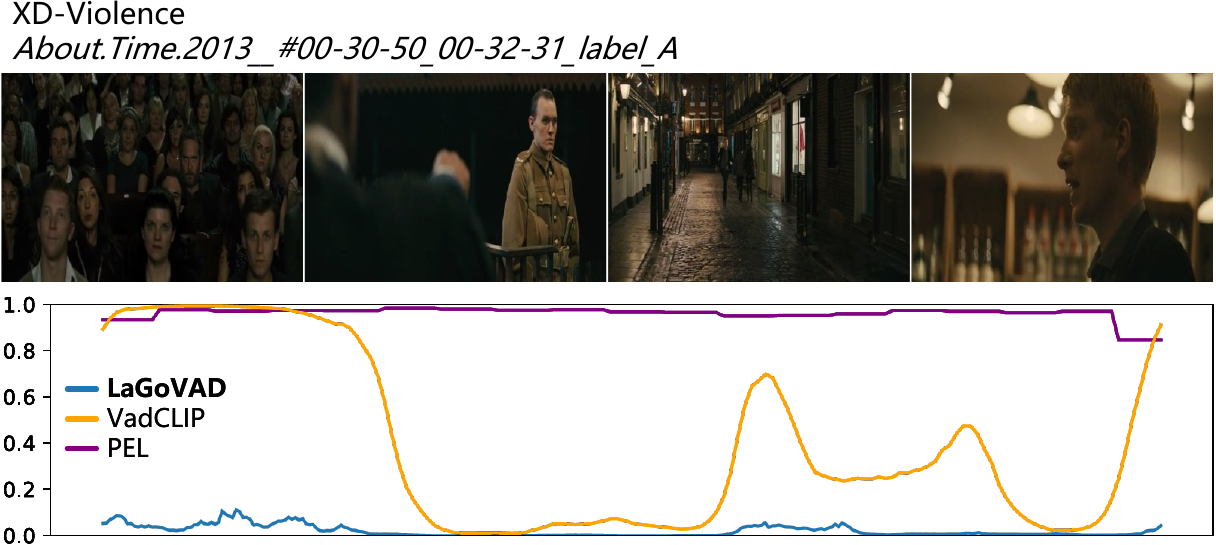}
        \caption{}
        \label{fig:supp-vis-results-sub3}
    \end{subfigure}
    \hfill
    \begin{subfigure}{0.48\textwidth}
        \includegraphics[width=\linewidth]{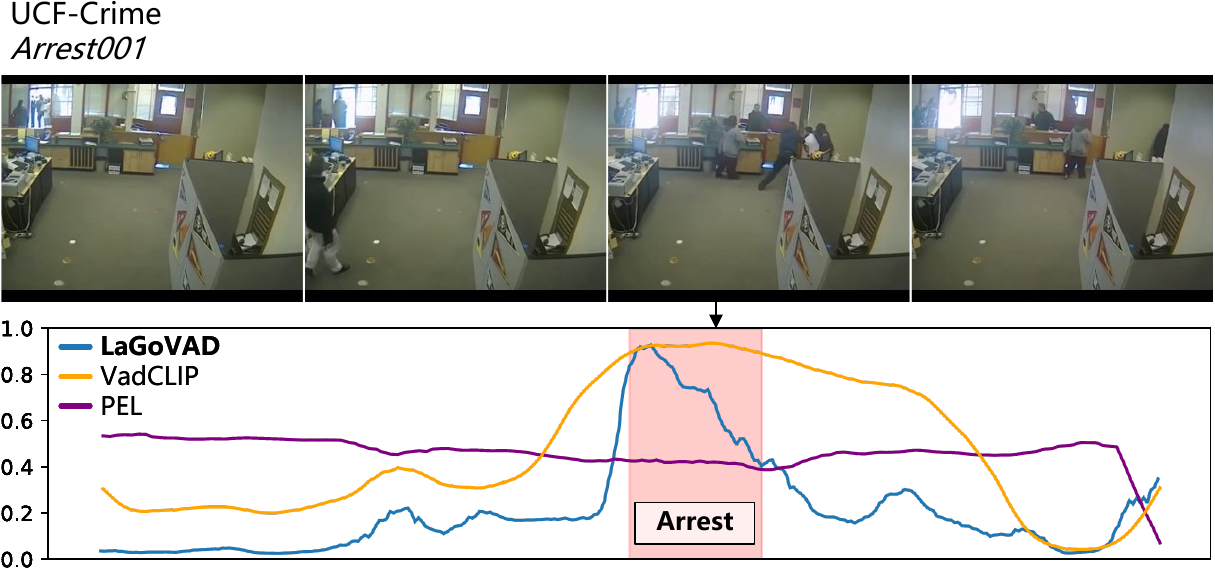}
        \caption{}
        \label{fig:supp-vis-results-sub4}
    \end{subfigure}
    \caption{Qualitative comparisons with other models. The two models used for comparison are trained on the UCF dataset (a)(b)(c) or the XD-Violence dataset (d). All the results are zero-shot results.}
    \label{fig:supp-vis-results}
\end{figure}

\begin{figure*}
    \centering
    \includegraphics[width=0.9\linewidth]{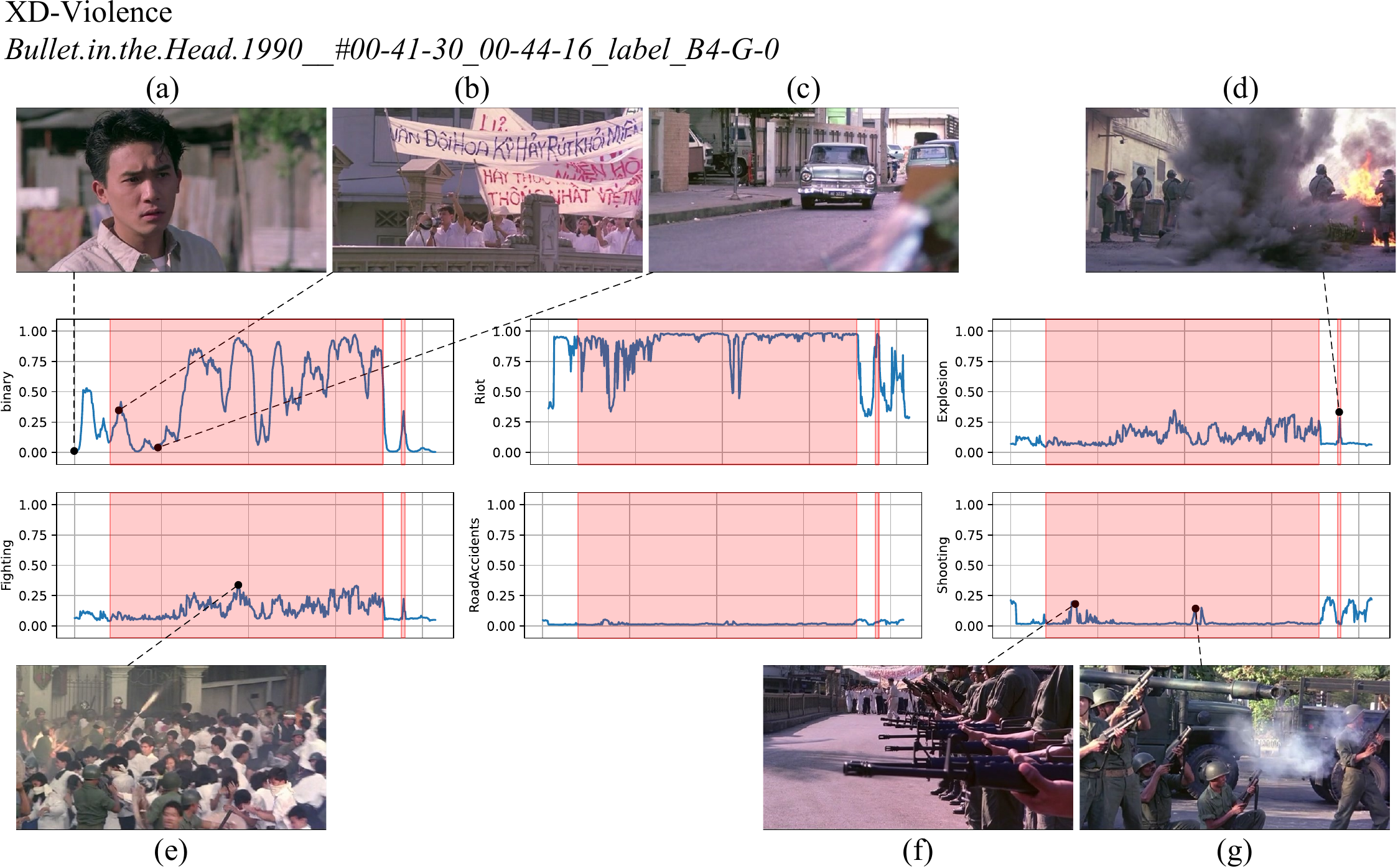}
    \caption{Visualization of results of the detection head and the classification head of the proposed LaGoVAD. The top-left plot illustrates the detection head output, while the remaining plots correspond to the classification head outputs, activated via the Sigmoid function. Frames (a)-(g) denote key frames. \textbf{LaGoVAD is able to detect fine-grained anomalies in a video that contains multiple different anomalies.}}
    \label{fig:supp-vis-mul-results}
\end{figure*}

\end{document}